\documentclass{article}

\hyphenpenalty=5000
\tolerance=1000

\pdfinclusioncopyfonts=1

\usepackage[utf8]{inputenc} 
\usepackage[T1]{fontenc}    

\usepackage[final,nonatbib]{neurips_2023}
\usepackage[sort&compress, numbers]{natbib}
\usepackage{hyperref} 

\usepackage{booktabs}       
\usepackage{nicefrac}       
\usepackage{microtype}      
\usepackage{graphicx}
\usepackage{caption}
\usepackage{subcaption}
\usepackage{amsfonts,mathtools}
\usepackage{algorithmic}
\usepackage[]{algorithm}
\usepackage{array}
\usepackage{bm}
\usepackage{mathrsfs}
\usepackage{float}
\usepackage{epstopdf}
\usepackage{amssymb}
\usepackage{amsmath}
\usepackage{multirow}
\usepackage{comment}

\usepackage[table]{xcolor}
\usepackage{makecell}

\definecolor{light-gray}{gray}{0.9}

\usepackage[capitalize,noabbrev]{cleveref}

\usepackage{enumitem}
\newcounter{descriptcount}
\newlist{enumdescript}{description}{1}
\setlist[enumdescript,1]{%
  before={\setcounter{descriptcount}{0}%
          \renewcommand*\thedescriptcount{\arabic{descriptcount}}},
        font={\stepcounter{descriptcount}Step \thedescriptcount ~}
}

\newcounter{mycounter}

\usepackage{amsthm}

\interdisplaylinepenalty=2500

\makeatletter
\newcommand*\rel@kern[1]{\kern#1\dimexpr\macc@kerna}

\newcommand*\widebar[1]{%
  \kern 0.18em
  \hbox{%
    \vbox{%
      \hrule height 0.5pt 
      \kern0.35ex
      \hbox{%
        \kern-0.18em
        \ensuremath{#1}%
        \kern-0.18em
      }%
    }%
  }%
  \kern 0.18em
}

\theoremstyle{plain}
\newtheorem{theorem}{Theorem}[section]
\newtheorem{proposition}[theorem]{Proposition}
\newtheorem{lemma}[theorem]{Lemma}

\theoremstyle{definition}

\newtheorem{assumption}[theorem]{Assumption}
\theoremstyle{remark}
\newtheorem{remark}[theorem]{Remark}

\newcommand{\norm}[1]{\ensuremath{\left\|#1\right\|}}	
\providecommand{\tr}[1]{\text{tr}\left(#1\right)}
\providecommand{\norm}[1]{\left \| #1 \right \|}
\providecommand{\vect}[1]{\mathrm{vec}\left(#1\right)}

\newcommand{\normI}[1]{{\left\vert\kern-0.25ex\left\vert\kern-0.25ex\left\vert #1 
    \right\vert\kern-0.25ex\right\vert\kern-0.25ex\right\vert}}

\title{Fast Projected Newton-like Method for Precision Matrix Estimation under Total Positivity}

\author{
	Jian-Feng Cai$^{1,2}$, \ Jos\'{e} Vin\'{i}cius de M. Cardoso$^{1}$, \ Daniel P. Palomar$^{1}$, \ Jiaxi Ying$^{1,2}$\thanks{Corresponding author.} \\ 
	Hong Kong University of Science and Technology$^1$\\ 
HKUST Shenzhen-Hong Kong Collaborative Innovation Research Institute$^2$\\
}

\begin{document}

\maketitle

\begin{abstract}
We study the problem of estimating precision matrices in Gaussian distributions that are multivariate totally positive of order two ($\mathrm{MTP}_2$). The precision matrix in such a distribution is an \textit{M}-matrix. This problem can be formulated as a sign-constrained log-determinant program. Current algorithms are designed using the block coordinate descent method or the proximal point algorithm, which becomes computationally challenging in high-dimensional cases due to the requirement to solve numerous nonnegative quadratic programs or large-scale linear systems. To address this issue, we propose a novel algorithm based on the two-metric projection method, incorporating a carefully designed search direction and variable partitioning scheme. Our algorithm substantially reduces computational complexity, and its theoretical convergence is established. Experimental results on synthetic and real-world datasets demonstrate that our proposed algorithm provides a significant improvement in computational efficiency compared to the state-of-the-art methods.
\end{abstract}

\vspace{-0.25cm}

\section{Introduction}\label{sec-1}

\vspace{-0.1cm}

We consider the problem of estimating the precision matrix (\textit{i.e.,} inverse covariance matrix) in a multivariate Gaussian distribution, where all the off-diagonal elements of the precision matrix are nonpositive. The resulting precision matrix is a symmetric \textit{M-matrix}. Such property is also known as multivariate totally positive of order two ($\mathrm{MTP}_2$) \cite{fallat2017total}. For ease of presentation, we call the nonpositivity constraints on the off-diagonal elements of the precision matrix as $\mathrm{MTP}_2$ constraints. This model arises in a variety of applications such as taxonomic reasoning \cite{slawski2015estimation}, graph signal processing \cite{egilmez2017graph}, factor analysis in psychometrics \cite{lauritzen2019maximum}, and financial markets \cite{Agrawal20}. 

Estimating precision matrices under $\mathrm{MTP}_2$ constraints is an active research topic. Recent results in \cite{lauritzen2019maximum,slawski2015estimation,lauritzen2021total} show that $\mathrm{MTP}_2$ constraints lead to a drastic reduction on the number of observations required for the maximum likelihood estimator (MLE) to exist in Gaussian distributions and Ising models. This advantage is crucial in high-dimensional regimes with limited observations. Growing interest in estimating precision matrices under $\mathrm{MTP}_2$ constraints is seen in graph signal processing \cite{pavez2018learning, pavez2016generalized, deng2020fast}. A precision matrix fulfilling $\mathrm{MTP}_2$ constraints can be regarded as a generalized graph Laplacian, where eigenvalues and eigenvectors are interpreted as spectral frequencies and Fourier bases, facilitating the computation of graph Fourier transform \cite{shuman2013emerging}. The $\mathrm{MTP}_2$ property has also been studied in portfolio allocation \cite{Agrawal20} and structure recovery \cite{NEURIPS2020_7cc980b0,wang2020learning}.

Estimating precision matrices under $\mathrm{MTP}_2$ constraints can be formulated as a sign-constrained log-determinant program. Existing algorithms for solving this problem, such as the block coordinate descent (BCD) \cite{slawski2015estimation, egilmez2017graph, pavez2016generalized} and proximal point algorithm (PPA) \cite{deng2020fast}, are efficient for low-dimensional problems. However, they become time-consuming in high-dimensional scenarios due to the necessity of solving numerous nonnegative quadratic programs or large-scale linear systems. An alternative is the gradient projection method \cite{ying2023adaptive}, which offers computational efficiency per iteration. Nevertheless, it often grapples with slow convergence rates in high-dimensional cases. Thus, there is a demand for efficient and scalable algorithms for precision matrix estimation under $\mathrm{MTP}_2$ constraints.

\vspace{-0.16cm}

\subsection{Contributions}

\vspace{-0.12cm}

In this paper, we propose a fast projected Newton-like algorithm for estimating precision matrices under $\mathrm{MTP}_2$ constraints. While second-order algorithms generally require fewer iterations than first-order methods, they often encounter computational challenges due to the necessity of computing a large (approximate) Hessian matrix inverse or equivalently solving the linear system. Our main contributions in this paper are threefold:
\begin{enumerate}[leftmargin=0.75cm]
\vspace{-0.14cm}
\setlength\itemsep{0.4em}
\item Our proposed algorithm, rooted in the two-metric projection method, stands apart from current BCD and PPA approaches \cite{slawski2015estimation, egilmez2017graph, pavez2016generalized, deng2020fast} for solving the target problem. Utilizing a well-designed search direction and variable partitioning scheme, our algorithm avoids the need to solve nonnegative quadratic programs or linear systems, yielding a significant computational reduction compared to BCD and PPA algorithms. As a second-order method, our algorithm maintains the same per-iteration computational complexity as the gradient projection method.
  
\item We establish that our algorithm converges to the minimizer of the target problem. Furthermore, under a mild assumption, we prove the convergence of the set of \textit{free} variables to the support of the minimizer within finite iterations and provide the convergence rate of our algorithm.

\item Numerical experiments on both synthetic and real-world datasets provide compelling evidence that our algorithm converges to the minimizer considerably faster than the compared methods. We apply the proposed method to financial time-series data and observe significant performance in terms of \textit{modularity} value on the learned financial networks. 
\end{enumerate}

\vspace{-0.22cm}

\paragraph{Notation:} Lower case bold letters denote vectors and upper case bold letters denote matrices. Both $X_{ij}$ and $[\bm X]_{ij}$ denote the $(i, j)$-th entry of $\bm X$. $[p]$ denotes the set $\{1, \ldots, p\}$, and $[p]^2$ denotes the set $\{1, \ldots, p\} \times \{1, \ldots, p\}$. Let $\otimes$ be the Kronecker product and $\odot$ be the entry-wise product. $\mathrm{supp}(\bm X)=\{ (i, j) \in [p]^2 \, | \, X_{ij} \neq 0 \}$. $\norm{\bm X}_{\max} = \max_{i,j} |X_{ij}|$ and $\norm{\bm X}_{\min} = \min_{i,j} |X_{ij}|$. $\mathbb{S}_{+}^p$ and $\mathbb{S}_{++}^p$ denote the sets of symmetric positive semi-definite and positive definite matrices with dimension $p \times p$. $\mathrm{vec} ( \bm X )$ and $\bm X^\top$ denote the vectorized version and transpose of $\bm X$. 

\vspace{-0.2cm} 
 
\section{Problem Formulation and Related Work}\label{sec-2}

\vspace{-0.2cm} 

In this section, we first introduce the problem formulation, then present related works.

\vspace{-0.2cm} 

\subsection{Problem formulation}\label{sec-GGM}

\vspace{-0.1cm} 

Let $\bm y = \left( y_1, \ldots, y_p \right) $ be a $p$-dimensional random vector following $\bm y \sim \mathcal{N}(\bm 0, \bm \Sigma)$, where $\bm \Sigma$ is the covariance matrix. We focus on the problem of estimating the precision matrix $\bm \Theta:= \bm \Sigma^{-1}$ given $n$ $i.i.d.$ observations $\bm y^{(1)}, \ldots, \bm y^{(n)}$. Let $\bm S = \frac{1}{n} \sum_{i=1}^n \bm y^{(i)} \big(\bm y^{(i)} \big)^\top$ be the sample covariance matrix. Throughout the paper, the sample covariance matrix is assumed to have strictly positive diagonal elements, which holds with probability one. This is because some diagonal element $S_{jj}$ is zero if and only if the $j$-th element of $\bm y^{(i)}$ must be zero for every $i \in [p]$, which holds with probability zero.

We consider solving the following sign-constrained log-determinant program:
\begin{equation}\label{MLE}
\begin{array}{rll}
\bm X^\star := & \underset{\bm X \in \mathcal{M}^p }{\mathsf{arg~min}} & - \log \det (\bm X) + \tr{\bm X \bm S} + \sum_{i \neq j} \lambda_{ij} \left| X_{ij} \right|,  \\  
& \mathsf{subject~to} &  X_{ij} = 0, \ \forall \, (i, j) \in \mathcal{E},
\end{array}
\end{equation}
where $\lambda_{ij}$ is the regularization parameter, $\mathcal{E}$ is the disconnectivity set with each node pair forced to disconnect, and $\mathcal{M}^p$ is the set of all $p$-dimensional, symmetric, non-singular \textit{M-matrices} defined by
\begin{equation}\label{set-M}
\mathcal{M}^p := \left\lbrace  \bm X \in \mathbb{S}_{++}^p | \, X_{ij} \leq 0, \ \forall \, i \neq j \right\rbrace.
\end{equation}
The disconnectivity set in \eqref{MLE} can be obtained in several ways: (\romannumeral1) it is often the case that some edges between nodes must not exist due to prior knowledge; (\romannumeral2) it can be estimated from initial estimators; (\romannumeral3) it can be obtained in some tasks of learning structured graphs such as bipartite graph \cite{kumar2019unified,cardoso2022learning,kumar2019structured}.


\vspace{-0.2cm} 

\subsection{Related work}\label{sec-RW}
 
\vspace{-0.1cm}  
 
Estimating precision matrices under Gaussian graphical models has been extensively studied in the literature. One well-known method is graphical lasso \cite{banerjee2008model, d2008first, ravikumar2011high, yuan2007model}, which minimizes the $\ell_1$-regularized Gaussian negative log-likelihood. Various algorithms were proposed to solve this problem including first-order methods \cite{banerjee2008model,d2008first,Duchi2008,friedman2008sparse,lu2009smooth,sun2018graphical,scheinberg2010sparse,eftekhari2021block,chen2018covariate,xu2017speeding,treister2014block} and second-order methods \cite{dinh2013proximal,hsieh2014quic,li2010inexact,wang2010solving}. The graphical lasso optimization problem is unconstrained and nonsmooth, while Problem \eqref{MLE} is smooth and constrained. The difficulties in solving the two problems are inherently different, and the algorithms mentioned above cannot be directly extended to solve Problem \eqref{MLE}. 

Recent studies \cite{slawski2015estimation, egilmez2017graph, pavez2016generalized, deng2020fast} employed BCD and PPA-type algorithms to estimate precision matrices under $\mathrm{MTP}_2$ constraints. In \cite{slawski2015estimation}, the primal variable is updated one column/row at a time by solving a nonnegative quadratic program, cycling until convergence. The work \cite{pavez2016generalized} follows a similar approach but addresses the dual problem, improving memory efficiency. Both works target Problem \eqref{MLE} without the disconnectivity constraint. A BCD-type algorithm was proposed in \cite{egilmez2017graph} to accommodate disconnectivity constraints. However, the computational complexity of these algorithms, at $O(p^4)$ operations per cycle, becomes prohibitive for high-dimensional problems. Alternatively, recent work \cite{deng2020fast} introduced an inexact PPA algorithm to solve a transformed problem, derived using the soft-thresholding technique. However, this algorithm demands the computation of an inexact Newton direction from a $p^2 \times p^2$ linear system at every iteration within the inner loop, presenting computational difficulties in high-dimensional scenarios.

The proposed algorithm adopts the two-metric projection framework \cite{gafni1984two}, incorporating distinct metrics for search direction and projection. A representative method in this framework is the projected Newton algorithm \cite{bertsekas1982projected}, originally designed for nonnegativity constrained problems. However, it is unsuitable for Problem~\eqref{MLE} due to its $O(p^6)$ operations needed to compute the inverse of the Hessian at each iteration. To mitigate computation and memory costs, the projected quasi-Newton algorithm with limited-memory Boyden-Fletcher-Goldfarb-Shanno (L-BFGS) was introduced in \cite{kim2010tackling}, requiring $O((m + p)p^2)$ operations per iteration, with $m$ the number of iterations stored for Hessian approximation. By leveraging the structure of Problem~\eqref{MLE}, this paper carefully designs the search direction and variable partitioning scheme to substantially reduce computation and memory costs, achieving the same orders per iteration as the gradient projection method \cite{ying2023adaptive}.

\vspace{-0.35cm}

\section{Proposed Algorithm}\label{sec-3}

\vspace{-0.3cm}

In this section, we propose a fast projected Newton-like algorithm to solve Problem \eqref{MLE}. The constraints in Problem \eqref{MLE} can be rewritten as $\bm X \in \Omega \, \cap \, \mathbb{S}_{++}^p$, where $\Omega$ is defined as
\begin{equation*}
\Omega := \left\lbrace \bm X \in \mathbb{R}^{p \times p} \, | \, X_{ij} = 0, \ \forall \, (i, j) \in \mathcal{E}; \ X_{ij} \leq 0, \ \forall \, i \neq j \right\rbrace.
\end{equation*}
The set $\Omega$ is convex and closed, thus this constraint can be handled by a projection $\mathcal{P}_{\Omega}$ defined by
\vspace{-0.1cm}
\begin{equation}\label{def_proj}
\big[ \mathcal{P}_{\Omega} (\bm A) \big]_{ij} =
\begin{cases}
0 & \mathrm{if} \ (i, j) \in \mathcal{E}, \\
A_{ij} & \mathrm{if} \ i =j, \\
\min (A_{ij}, 0) & \mathrm{if} \ (i, j) \notin \mathcal{E} \ \mathrm{and} \ i \neq j. 
\end{cases}
\end{equation}
The positive definite set $\mathbb{S}_{++}^p$ is not closed and cannot be managed by a projection, which will be handled using the line search method in \cref{sec-33}. Let $f$ denote the objective function of Problem \eqref{MLE}. To address Problem \eqref{MLE}, we start with the gradient projection method, expressed as:
\begin{equation}\label{pgd}
\bm X_{k+1} = \mathcal{P}_{\Omega} \big( \bm X_k - \gamma_k \nabla f \left( \bm X_k \right) \big),
\end{equation}
where $\gamma_k$ is the step size. To accelerate convergence, one may consider 
\begin{equation}\label{pqn}
\bm X_{k+1} = \mathcal{P}_{\Omega} \big( \bm X_k - \gamma_k \bm P_k \big),
\end{equation}
in which $\bm P_k \in \mathbb{R}^{p \times p}$ is a search direction defined by
\vspace{-0.05cm}
\begin{equation}\label{def_P}
\vect{\bm P_k} = \bm M^{-1}_k \vect{\nabla f(\bm X_k)},
\end{equation}
where $\bm M_k \in \mathbb{R}^{p^2 \times p^2}$ is a positive definite symmetric matrix, incorporating second-order derivative information. If we adopt $\bm M_k$ as the Hessian matrix, then $\bm P_k$ becomes the Newton direction as shown in Proposition~\ref{lem1}, with the proof provided in Appendix~\ref{proof-lem1}, and iterate \eqref{pqn} can be viewed as a natural adaptation of the unconstrained Newton's method. Regrettably, the convergence of such an iterate to the minimizer cannot be guaranteed, as $\bm P_k$ may not be a descent direction here, which is supported by numerical results in \cref{sec-newton-convergence}. Similar observations have also been reported in \cite{bertsekas1982projected,kim2007fast}. 
\begin{proposition}\label{lem1}
If $\bm M_k$ is constructed as the Hessian matrix $\bm H_k$ of Problem \eqref{MLE}, then the search direction $\bm P_k$ defined in \eqref{def_P} can be written as $\bm P_k = \bm X_k \nabla f(\bm X_k) \bm X_k$.
\end{proposition}

\begin{remark}
We refer to iterate \eqref{pqn} as the two-metric projection method \cite{gafni1984two}, as it adopts two distinct metrics: the search direction $\bm P_k$ induced by the quadratic norm $\norm{ \cdot }_{\bm M_k}$ defined by $\Vert \bm A \Vert_{\bm M_k} = \langle \vect{\bm A}, \, \bm M_k \vect{\bm A} \rangle^{\frac{1}{2}}$, and the projection $\mathcal{P}_{\Omega}$ with respect to the Frobenius norm $\norm{ \cdot }_{\mathrm{F}}$. 
\end{remark}

\vspace{-0.25cm}

\subsection{Identifying the sets of \textit{restricted} and \textit{free} variables}\label{sec-31}

\vspace{-0.2cm}

In order to guarantee iterate \eqref{pqn} to converge to the minimizer, we partition the variables into two groups, \textit{i.e.}, \textit{restricted} and \textit{free} variables, and update the two groups separately. We first define a set $\mathcal{T}(\bm X, \epsilon)$ with respect to $\bm X \in \mathbb{S}_{++}^p$ and $\epsilon \in \mathbb{R}_+$,
\begin{equation}\label{def-setT}
\begin{gathered}
\mathcal{T} (\bm X, \epsilon) : = \big\lbrace (i, j) \in [p]^2 \, | \, - \epsilon \leq X_{ij} \leq 0, \ [ \nabla f ( \bm X ) ]_{ij} < 0 \big\rbrace.
\end{gathered}
\end{equation}
Then at the $k$-th iteration, we identify the set of \textit{restricted} variables based on $\bm X_k$ as follows,
\begin{equation}\label{def_res_set}
\begin{gathered}
\mathcal{I}_k := \mathcal{T} (\bm X_k, \epsilon_k) \cup \mathcal{E},
\end{gathered}
\end{equation} 
where $\mathcal{E}$ is the disconnectivity set from Problem \eqref{MLE}, and $\epsilon_k$ is a small positive scalar. For any $(i,j) \in \mathcal{T} (\bm X_k, \epsilon_k)$, $X_{ij}$ in the next iterate is likely to be outside the feasible set (\textit{i.e.}, $X_{ij} > 0$) if we remove the projection $\mathcal{P}_{\Omega}$, as it is near zero and moves towards the positive direction if using the negative of the gradient as the search direction. Therefore, we set all \textit{restricted} variables to zero. 

To establish the theoretical convergence of the algorithm, the positive scalar $\epsilon_k$ in \eqref{def_res_set} is specified as
\begin{equation}\label{epsl-k}
\begin{gathered}
\epsilon_k := \min \Big( 2(1-\alpha) m^2 \big\| [\nabla f(\bm X_k) ]_{\mathcal{T}_{\delta} \setminus \mathcal{E}} \big\|_{\min}, \, \delta \Big),
\end{gathered}
\end{equation}
where $m$ is a positive scalar (See Lemma~\ref{lemma:level set} in Appendix), $\alpha \in (0, 1)$ is a parameter in the line search condition, and $\mathcal{T}_{\delta}$ represents the set $\mathcal{T} (\bm X_k, \delta)$. In the rare event that $\mathcal{T}_{\delta}$ is empty, particularly in sparse settings, we define $\epsilon_k = \delta$, implying an empty $\mathcal{T} (\bm X_k, \epsilon_k)$ in \eqref{def_res_set}. The parameter $\delta$ satisfies $0 < \delta < \min_{(i,j) \in \mathrm{supp} (\bm X^\star)} | [ \bm X^\star ]_{ij} |$, where $\bm X^\star$ is the minimizer of Problem \eqref{MLE}. Such a condition can be ensured by setting a sufficiently small positive $\delta$. Then $\epsilon_k$ in \eqref{epsl-k} is nearly equal to $\delta$. From an implementation view, we can directly set a small positive $\epsilon_k$, resulting in the algorithm performing well in practice. The set of free variables, denoted by $\mathcal{I}_k^c$, is the complement of $\mathcal{I}_k$.

\vspace{-0.2cm}

\subsection{Computing approximate Newton direction}\label{sec-32}

\vspace{-0.16cm}

While the (approximate) Newton direction usually demands a considerably higher computational cost than the gradient, our designed direction maintains the same computational order as the gradient.

At the $k$-th iteration, we first partition $\bm X_k$ into two groups, $[ \bm X_k ]_{\mathcal{I}_k}$ and $[ \bm X_k ]_{\mathcal{I}_k^c}$, where $[ \bm X_k ]_{\mathcal{I}_k} \in \mathbb{R}^{\left| \mathcal{I}_k\right|}$ and $[ \bm X_k ]_{\mathcal{I}_k^c}  \in \mathbb{R}^{\left| \mathcal{I}_k^c\right|}$ denote two vectors containing all elements of $\bm X_k$ in the sets $\mathcal{I}_k$ and $\mathcal{I}_k^c$, respectively.
Then we can rewrite the search direction $\bm P_k$ in \eqref{def_P} as follows,
\begin{equation}\label{def_P_new}
\mathrm{pvec}_k (\bm P_k) = \bm Q_k \, \mathrm{pvec}_k (\nabla f(\bm X_k)),
\end{equation}
where $\mathrm{pvec}_k (\bm P_k)$ stacks $\bm P_k$ into a vector, similar to $\mathrm{vec}(\bm P_k)$, but places elements from $\mathcal{I}_k^c$ first, followed by those from $\mathcal{I}_k$. $\bm Q_k$ is obtained by permuting the rows and columns of $\bm M^{-1}_k$ in \eqref{def_P}. To enhance computational efficiency, we propose constructing $\bm Q_k$ and $\bm M_k^{-1}$ as follows:
\begin{equation}\label{def_M}
\bm Q_k = 
\begin{bmatrix}
\big[\bm M_k^{-1} \big]_{ \mathcal{I}_k^c \mathcal{I}_k^c} & \big[\bm M_k^{-1} \big]_{\mathcal{I}_k^c \mathcal{I}_k} \\
\big[\bm M_k^{-1} \big]_{\mathcal{I}_k \mathcal{I}_k^c}& \big[\bm M_k^{-1} \big]_{\mathcal{I}_k \mathcal{I}_k}\\
\end{bmatrix}
=
\begin{bmatrix}
\big[\bm H_k^{-1} \big]_{\mathcal{I}_k^c \mathcal{I}_k^c} & \bm 0 \\
\bm 0 & \bm D_k\\
\end{bmatrix},
\end{equation}
where $\bm D_k \in \mathbb{R}^{|\mathcal{I}_k| \times |\mathcal{I}_k|}$ is a positive definite diagonal matrix, and $\left[ \bm H^{-1}_k \right]_{\mathcal{I}_k^c \mathcal{I}_k^c} \in \mathbb{R}^{|\mathcal{I}_k^c| \times |\mathcal{I}_k^c|}$ is a principal submatrix of $\bm H^{-1}_k$, preserving rows and columns indexed by $\mathcal{I}_k^c$. Here, $\bm H_k$ is the Hessian matrix at $\bm X_k$. The construction of $\bm M_k^{-1}$ in \eqref{def_M} is crucial for defining the search direction, enabling computation and memory costs comparable to the gradient projection method while effectively incorporating second-order derivative information.

Next, we compute the approximate Newton direction $\bm P_k$ over the set $\mathcal{I}_k^c$ and present the iterate ${[ \bm X_{k+1}]}_{\mathcal{I}_k^c}$. We define a projection $\mathcal{P}_{\mathcal{I}_k^c} (\bm A)$ as follows,
\begin{equation}\label{proj_Ic}
\big[\mathcal{P}_{\mathcal{I}_k^c} (\bm A)\big]_{ij} =  
\begin{cases}
A_{ij} & \mathrm{if} \, (i, j) \in \mathcal{I}_k^c, \\
0 & \mathrm{otherwise}.
\end{cases}
\end{equation}
Leveraging the well-designed gradient scaling matrix $\bm Q_k$ in \eqref{def_M}, we can efficiently compute the approximate Newton direction, as demonstrated in Proposition~\ref{lem2}, with proof in Appendix~\ref{proof-lem2}.
\begin{proposition}\label{lem2}
If $\bm M_k$ is constructed by \eqref{def_M}, then the search direction $\bm P_k$ defined in \eqref{def_P} over $\mathcal{I}_k^c$ can be written as ${[ \bm P_k ]}_{\mathcal{I}_k^c} = {[ \bm X_k \mathcal{P}_{\mathcal{I}_k^c} ( \nabla  f ( \bm X_k ) ) \bm X_k ]}_{ \mathcal{I}_k^c}$.
\end{proposition}
Using the search direction from Proposition~\ref{lem2}, we update $\bm X_{k+1}$ over $\mathcal{I}_k^c$ as follows,
\begin{equation}\label{update-new}
{[ \bm X_{k+1} ]}_{\mathcal{I}_k^c}  = \mathcal{P}_{\Omega} \Big( {[ \bm X_k ]}_{\mathcal{I}_k^c} - \gamma_k {[ \bm X_k \mathcal{P}_{\mathcal{I}_k^c} ( \nabla  f ( \bm X_k ) ) \bm X_k ]}_{ \mathcal{I}_k^c} \Big).
\end{equation}
For the \textit{restricted} variables in the set $\mathcal{I}_k$, we directly set them to zero, \textit{i.e.}, ${[\bm X_{k+1} ]}_{\mathcal{I}_k} = \bm 0$.

\subsection{Computing step size}\label{sec-33}

We adopt an Armijo-like rule for step size selection, ensuring the global convergence of our algorithm. Based on the iterate proposed in \cref{sec-32}, we define $\bm X_k (\gamma_k)$ with ${[ \bm X_k (\gamma_k) ]}_{\mathcal{I}_k} = \bm 0$ and
\begin{equation}
\begin{gathered}
{[\bm X_k (\gamma_k)]}_{\mathcal{I}_k^c}  = \mathcal{P}_{\Omega} \Big( {[ \bm X_k ]}_{\mathcal{I}_k^c} - \gamma_k {\big[ \bm X_k \mathcal{P}_{\mathcal{I}_k^c} ( \nabla  f ( \bm X_k ) ) \bm X_k \big]}_{ \mathcal{I}_k^c} \Big).
\end{gathered}
\end{equation}
We test step sizes $\gamma_k \in \left\lbrace \beta^0, \beta^1, \beta^2, \ldots \right\rbrace$ with $\beta \in (0, 1)$, until we find the smallest $t \in \mathbb{N}$ such that $\bm X_k (\gamma_k)$, with $\gamma_k = \beta^t$, satisfies $\bm X_k (\gamma_k) \in \mathbb{S}_{++}^p$ and the line search condition:
\begin{equation}\label{eq:line-search}
\begin{gathered}
 f ( \bm X_{k} (\gamma_k) ) \leq f ( \bm X_k ) - \alpha \gamma_k \big\langle {[\nabla f(\bm X_k) ]}_{\mathcal{I}^c_k}, \,  {[\bm P_k ]}_{\mathcal{I}^c_k} \big\rangle - \alpha  \big\langle {[\nabla f(\bm X_k) ]}_{\mathcal{I}_k}, \, {[ \bm X_k ]}_{\mathcal{I}_k}  \big\rangle,
 \end{gathered}
\end{equation}
where $\alpha \in (0, 1)$ is a scalar. We then set $\bm X_{k+1} = \bm X_{k} (\gamma_k)$. Positive definiteness of $\bm X_{k+1}$ can be verified during Cholesky factorization for objective function evaluation. It is worth mentioning that working with the positive semi-definiteness constraint on $\bm X$ instead of positive definiteness would not change anything in the algorithm if we keep the line search, as the positive definiteness is automatically enforced due to the form of the objective function.

The line search condition \eqref{eq:line-search} is a variant of the Armijo rule. Condition \eqref{eq:line-search} can be always satisfied for a small enough step size as shown in Proposition~\ref{proposition_linesearch}. Define the feasible set of Problem \eqref{MLE} as 
\begin{equation}\label{eq:feasible-set}
\begin{gathered}
\mathcal{U}^p:=\left\lbrace \bm X \in \mathbb{R}^{p \times p} \, | \, \bm X \in \mathcal{M}^p, \, X_{ij} =0, \, \forall \, (i, j) \in \mathcal{E} \right\rbrace.
\end{gathered}
\end{equation}
For any given $\bm X^o \in \mathcal{U}^p$, define the lower level set of the objective function $f$ for Problem \eqref{MLE} as:
\begin{equation}\label{Level set}
L_f : = \left\lbrace \bm X \in \mathcal{U}^p \, | \, f(\bm X) \leq f(\bm X^o) \right\rbrace.
\end{equation}

\begin{proposition}\label{proposition_linesearch}
For any $\bm X_k \in L_f$, there exists a $\bar{\gamma}_k >0$ such that $\bm X_{k} (\gamma_k) \in \mathbb{S}_{++}^p$ and the line search condition \eqref{eq:line-search} holds for any $\gamma_k \in (0, \, \bar{\gamma}_k)$.
\end{proposition}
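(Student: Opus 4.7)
Fix $\bm X_k \in L_f$, set $\Delta(\gamma_k) := \bm X_k(\gamma_k) - \bm X_k$, and define $A := \langle [\nabla f(\bm X_k)]_{\mathcal{I}_k^c}, [\bm P_k]_{\mathcal{I}_k^c}\rangle$ and $B := \langle [\nabla f(\bm X_k)]_{\mathcal{I}_k}, [\bm X_k]_{\mathcal{I}_k}\rangle$. The plan is to apply a second-order Taylor expansion of $f$ around $\bm X_k$, split the linear term along the partition $\mathcal{I}_k \cup \mathcal{I}_k^c$, and show the remainder is dominated by the descent slack $(1-\alpha)(B + \gamma_k A)$ once we invoke the tailored choice \eqref{epsl-k} of $\epsilon_k$ together with the Hessian bound coming from the eigenvalue lower bound supplied by \cref{lemma:level set}. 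Both $A$ and $B$ are nonnegative: by \cref{lem2}, $A$ equals the quadratic form of $[\nabla f(\bm X_k)]_{\mathcal{I}_k^c}$ with the principal submatrix $[\bm H_k^{-1}]_{\mathcal{I}_k^c \mathcal{I}_k^c}$ of the positive definite $\bm X_k \otimes \bm X_k$, while $B \ge 0$ follows from sign considerations ($[\nabla f]_{ij} < 0$ and $[\bm X_k]_{ij}\le 0$ on $\mathcal{I}_k \setminus \mathcal{E}$, and $[\bm X_k]_{\mathcal{E}}=0$).

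For positive definiteness I first observe that $\bm X_k(0)$ agrees with $\bm X_k$ on $\mathcal{I}_k^c$ and zeros out the restricted entries, each of magnitude at most $\epsilon_k \le \delta$. Choosing $\delta$ small (permitted by \eqref{delta-bound}) pushes $\|\bm X_k(0) - \bm X_k\|_F$ below $\lambda_{\min}(\bm X_k) \ge m$, so $\bm X_k(0) \in \mathbb{S}_{++}^p$. Continuity of $\gamma_k \mapsto \bm X_k(\gamma_k)$ (a composition of a linear map with the continuous projection $\mathcal{P}_\Omega$) then extends this to a right-neighborhood of $0$.

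Next I split $\Delta(\gamma_k)$: on $\mathcal{I}_k$, $\Delta(\gamma_k)_{\mathcal{I}_k} = -[\bm X_k]_{\mathcal{I}_k}$ is constant in $\gamma_k$; on $\mathcal{I}_k^c$, the interior entries (the diagonal, and those off-diagonal entries with $[\bm X_k]_{ij}<-\epsilon_k$) remain feasible under a small step, so $\mathcal{P}_\Omega$ acts as the identity, giving $\Delta(\gamma_k)_{ij}=-\gamma_k[\bm P_k]_{ij}$. The remaining boundary entries in $\mathcal{I}_k^c$ (where $[\bm X_k]_{ij}=0$, which by the definition of $\mathcal{T}$ force $[\nabla f]_{ij}\ge 0$) may be clipped by $\mathcal{P}_\Omega$, but a short case analysis shows clipping only decreases $\langle [\nabla f]_{ij}, \Delta(\gamma_k)_{ij}\rangle$, so the aggregate linear contribution on $\mathcal{I}_k^c$ is still bounded above by $-\gamma_k A$. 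Combining with $|\Delta(\gamma_k)_{ij}|\le \gamma_k |[\bm P_k]_{ij}|$ entrywise on $\mathcal{I}_k^c$, Taylor's theorem along the segment from $\bm X_k$ to $\bm X_k(\gamma_k)$ (which stays in a region where $\|\bm H\|_2 = \|\bm X^{-1}\otimes \bm X^{-1}\|_2 \le 1/m^2$ for small enough displacement) yields
\begin{equation*}
f(\bm X_k(\gamma_k)) - f(\bm X_k) \le -B - \gamma_k A + \tfrac{1}{2m^2}\big(\|[\bm X_k]_{\mathcal{I}_k}\|_F^2 + \gamma_k^2 \|[\bm P_k]_{\mathcal{I}_k^c}\|_F^2\big).
\end{equation*}

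The crux is the estimate $\tfrac{1}{2m^2}\|[\bm X_k]_{\mathcal{I}_k}\|_F^2 \le (1-\alpha)B$. Since every restricted entry has modulus at most $\epsilon_k$, $\|[\bm X_k]_{\mathcal{I}_k}\|_F^2 \le \epsilon_k \|[\bm X_k]_{\mathcal{I}_k}\|_1$; and $B = \sum_{(i,j)\in \mathcal{I}_k\setminus\mathcal{E}} |[\nabla f]_{ij}|\cdot|[\bm X_k]_{ij}| \ge \|[\nabla f(\bm X_k)]_{\mathcal{T}_\delta\setminus\mathcal{E}}\|_{\min}\cdot\|[\bm X_k]_{\mathcal{I}_k}\|_1$ because $\mathcal{I}_k\setminus\mathcal{E}\subseteq\mathcal{T}_\delta\setminus\mathcal{E}$. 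Dividing and substituting \eqref{epsl-k} gives exactly $\|[\bm X_k]_{\mathcal{I}_k}\|_F^2 / B \le 2(1-\alpha)m^2$, as needed. The quadratic-in-$\gamma_k$ piece $\tfrac{\gamma_k^2}{2m^2}\|[\bm P_k]_{\mathcal{I}_k^c}\|_F^2$ is absorbed into $(1-\alpha)\gamma_k A$ for $\gamma_k$ below some threshold (the degenerate case $[\bm P_k]_{\mathcal{I}_k^c}=0$ forces $A=0$ and is trivial), and $\bar\gamma_k$ is taken as the minimum of the positive-definiteness, feasibility, and absorption thresholds. I expect the main obstacle to be the careful treatment of the boundary entries $[\bm X_k]_{ij}=0$ in $\mathcal{I}_k^c$, where $\mathcal{P}_\Omega$ may actively clip the update so that the identity $\Delta(\gamma_k)_{\mathcal{I}_k^c}=-\gamma_k[\bm P_k]_{\mathcal{I}_k^c}$ fails; a clean resolution relies on the variational characterization of convex projections to argue that clipping can only help descent.
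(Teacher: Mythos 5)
Your line-search argument essentially reproduces the paper's: the same Taylor bound with Hessian controlled by $1/m^2$ on $L_f$, the same split of the linear term across $\mathcal{I}_k$ and $\mathcal{I}_k^c$, the same observation that projection can only help the directional decrease, the same use of $\|[\bm P_k]_{\mathcal{I}_k^c}\|^2 \le M^2 A$ via eigenvalue interlacing, and — most importantly — the same exploitation of the choice \eqref{epsl-k} through $\|[\bm X_k]_{\mathcal{I}_k}\|^2 \le \epsilon_k \|[\bm X_k]_{\mathcal{I}_k}\|_1$ and $B \ge \|[\nabla f(\bm X_k)]_{\mathcal{T}_\delta\setminus\mathcal{E}}\|_{\min}\|[\bm X_k]_{\mathcal{I}_k}\|_1$ to absorb the $\gamma_k$-independent quadratic piece into $(1-\alpha)B$. (Your parenthetical that the possibly clipped boundary entries are exactly those with $[\bm X_k]_{ij}=0$ is inaccurate — clipping can occur whenever $-\epsilon_k \le [\bm X_k]_{ij} \le 0$ with $[\bm P_k]_{ij}<0$, which is the paper's set $\mathcal{B}_k^{(1)}$ — but your subsequent ``clipping only decreases the inner product'' observation does handle the full set, so this is cosmetic.)

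The genuine gap is the positive-definiteness argument for $\bm X_k(0)$. You bound $\|\bm X_k(0) - \bm X_k\|_{\mathrm F}$ by the maximum modulus $\epsilon_k \le \delta$ of the entries being zeroed, but there can be up to $O(p^2)$ such entries, so the bound is really $\delta\sqrt{|\mathcal{I}_k\setminus\mathcal{E}|}$. Condition \eqref{delta-bound} does not make this smaller than $m$, and it cannot be saved by ``choosing $\delta$ small'': $m = e^{-f(\bm X^o)}M^{-(p-1)}$ from \cref{lemma:level set} is exponentially small in $p$, so you would need a $\delta$ far tinier than \eqref{delta-bound} permits, and $\delta$ is a fixed algorithm parameter anyway. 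The paper instead uses the M-matrix structure in a way your perturbation argument misses: $\bm X_k(0)$ is exactly $\bm Z_k$, which keeps the (positive) diagonal of $\bm X_k$ and deletes some of its nonpositive off-diagonals, so $\bm Z_k$ is still a Z-matrix, and $\bm Z_k \bm x \ge \bm X_k \bm x > \bm 0$ for the positive vector $\bm x$ certifying $\bm X_k$ as a nonsingular M-matrix; hence $\bm Z_k$ is a nonsingular M-matrix, hence in $\mathbb{S}_{++}^p$, regardless of how many entries were zeroed and with no condition on $\delta$. This structural argument is the missing idea; once you have $\bm Z_k \succ 0$, your continuity step to a right-neighborhood of $\gamma_k=0$ is fine and matches the paper's quantitative threshold $\gamma_k < \lambda_{\min}(\bm Z_k)/\rho(\bm G_k)$.
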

The proof of Proposition~\ref{proposition_linesearch} is available in Appendix~\ref{proof-proposition_line}. We demonstrate that $\bm X_{k} (\gamma_k)$ ensures a decrease of the objective function value in Proposition~\ref{proposition_decrease}, proved in Appendix~\ref{proof-proposition_decrease}.
\begin{proposition}\label{proposition_decrease}
For any $\bm X_k \in L_f$, if $\bm X_{k} (\gamma_k)$ satisfies the line search condition \eqref{eq:line-search}, then we have
\begin{equation*}
f \left(\bm X_{k} (\gamma_k)\right)  \leq f \left(\bm X_{k} \right) - \alpha \gamma_k m^2 \big \| \big[ \nabla f(\bm X_k) \big]_{ \mathcal{I}^c_k} \big\|^2,
\end{equation*}
where $m$ is a positive scalar (See Lemma~\ref{lemma:level set} in Appendix).
\end{proposition}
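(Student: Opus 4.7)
The plan is to start from the line search condition \eqref{eq:line-search} and bound its right-hand side from above by weakening the two explicit terms separately. Concretely, I would show that (a) the term $\big\langle [\nabla f(\bm X_k)]_{\mathcal{I}_k}, [\bm X_k]_{\mathcal{I}_k}\big\rangle$ is non-negative, so dropping it only makes the inequality weaker; and (b) the term $\big\langle [\nabla f(\bm X_k)]_{\mathcal{I}_k^c}, [\bm P_k]_{\mathcal{I}_k^c}\big\rangle$ is bounded below by $m^2 \big\|[\nabla f(\bm X_k)]_{\mathcal{I}_k^c}\big\|^2$. Substituting (a) and (b) into \eqref{eq:line-search} yields exactly the claimed decrease.

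For step (a), I would use the partition $\mathcal{I}_k = \mathcal{T}(\bm X_k,\epsilon_k) \cup \mathcal{E}$ from \eqref{def_res_set}. For $(i,j)\in\mathcal{E}$ the feasibility $\bm X_k \in \mathcal{U}^p \supseteq L_f$ forces $[\bm X_k]_{ij}=0$, so the contribution vanishes. For $(i,j)\in\mathcal{T}(\bm X_k,\epsilon_k)$, the defining inequalities $-\epsilon_k \le [\bm X_k]_{ij} \le 0$ and $[\nabla f(\bm X_k)]_{ij} < 0$ from \eqref{def-setT} force $[\nabla f(\bm X_k)]_{ij}[\bm X_k]_{ij} \ge 0$. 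Summing over $\mathcal{I}_k$ gives the desired non-negativity.

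For step (b), I would invoke identity \eqref{non-nega-gh}, which rewrites the inner product as a quadratic form in the principal submatrix $[\bm M_k^{-1}]_{\mathcal{I}_k^c \mathcal{I}_k^c} = [\bm H_k^{-1}]_{\mathcal{I}_k^c \mathcal{I}_k^c}$. This quadratic form is at least $\lambda_{\min}\big([\bm H_k^{-1}]_{\mathcal{I}_k^c \mathcal{I}_k^c}\big)\cdot \big\|[\nabla f(\bm X_k)]_{\mathcal{I}_k^c}\big\|^2$. Since $\bm H_k^{-1} = \bm X_k \otimes \bm X_k$ and $\bm X_k \succeq m\bm I$ by \cref{lemma:level set}, one has $\lambda_{\min}(\bm H_k^{-1}) = \lambda_{\min}^2(\bm X_k) \ge m^2$; the Eigenvalue Interlacing Theorem then transfers this lower bound to any principal submatrix, giving $\lambda_{\min}\big([\bm H_k^{-1}]_{\mathcal{I}_k^c \mathcal{I}_k^c}\big) \ge m^2$. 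This proves (b).

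The argument is essentially a direct chain of substitutions; the only subtlety is keeping track of signs in step (a) and ensuring that the interlacing bound in step (b) applies with the correct direction (lower bound on the smallest eigenvalue of a principal submatrix). No genuine obstacle is expected, since the hard technical content, such as compactness of the level set and the structural identities for $\bm P_k$, has already been developed in \cref{lemma:level set,lem1,lem2} and in the derivation of \eqref{non-nega-gh}.
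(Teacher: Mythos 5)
Your proposal is correct and follows essentially the same route as the paper: drop the nonnegative restricted-set term $\big\langle [\nabla f(\bm X_k)]_{\mathcal{I}_k}, [\bm X_k]_{\mathcal{I}_k}\big\rangle$ from the line search condition, then rewrite the free-set inner product via \eqref{non-nega-gh} as a quadratic form in $[\bm M_k^{-1}]_{\mathcal{I}_k^c\mathcal{I}_k^c}=[\bm H_k^{-1}]_{\mathcal{I}_k^c\mathcal{I}_k^c}$ and lower-bound its smallest eigenvalue by $m^2$ using eigenvalue interlacing together with \cref{lemma:level set}. The only cosmetic difference is that you spell out the sign-by-sign argument for the nonnegativity in step (a), which the paper simply asserts in passing.
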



\begin{algorithm}[htb]
	\caption{Fast Projected Newton-like (FPN) algorithm} \label{algo} 
		\begin{algorithmic}[1]
		\STATE {\bfseries Input:} Regularization parameter $\lambda_{ij}$, $\alpha$, $\beta$, $\bm S$, and $\bm X_0$; 
		 \vspace{0.05cm}		
		\FOR { $k = 0, 1, 2, \ldots$}	
		 \vspace{0.05cm}
		\STATE Identify the \textit{restricted} set $\mathcal{I}_k$ and \textit{free} set $\mathcal{I}^c_k$ according to \eqref{def_res_set};
		\vspace{0.1cm}	
       \STATE Compute the search direction over $\mathcal{I}^c_k$: ${[\bm P_k ]}_{\mathcal{I}^c_k} = {[ \bm X_k \mathcal{P}_{\mathcal{I}_k^c} \big( \nabla  f \left( \bm X_k \right) \big) \bm X_k ]}_{\mathcal{I}_k^c}$;
       \STATE $t \gets 0$;
       \REPEAT
       \vspace{0.05cm}
       \STATE Update $\bm X_{k+1}$: ${[\bm X_{k+1} ]}_{\mathcal{I}_k} = \bm 0$, and ${[ \bm X_{k+1} ]}_{\mathcal{I}_k^c} = \mathcal{P}_{\Omega}  \big( {[ \bm X_k ]}_{\mathcal{I}_k^c} - \beta^t { [\bm P_k ]}_{\mathcal{I}^c_k} \big)$;
       \STATE $t \gets t+1$;
       \vspace{0.05cm}
       \UNTIL {$\bm X_{k+1} \succ \bm 0$, and       
       \vspace{-0.05cm}
       $$f \left( \bm X_{k+1} \right) \leq f\left( \bm X_k \right) - \alpha \beta^t \big\langle  {[\nabla f(\bm X_k) ]}_{\mathcal{I}^c_k}, {[\bm P_k ]}_{\mathcal{I}^c_k} \big\rangle - \alpha \big\langle {[\nabla f(\bm X_k) ]}_{\mathcal{I}_k}, {[ \bm X_k ]}_{\mathcal{I}_k} \big\rangle.$$}
        \vspace{-0.3cm}
		\ENDFOR
	\end{algorithmic}
\end{algorithm}

\subsection{Computation and memory costs}\label{sec-34}

In each iteration, our algorithm calculates the gradient, performs two matrix multiplications, and conducts two projections, with respective computational costs of $O(p^3)$, $O(p^3)$, and $O(p^2)$. In our current implementation of the line search method, we first conduct the Cholesky factorization $\bm X = \bm L \bm L^\top$ using MATLAB's ``chol'' function. This function can simultaneously verify the positive definiteness of $\bm X$. Next, we calculate the log-determinant function as $\log \det (\bm X) = 2 \sum_i \log ( L_{ii} )$. The Cholesky factorization is the most computationally demanding step, generally requiring $O(p^3)$ costs for a $p \times p$ matrix.

To mitigate the computational burden associated with Cholesky factorization, we suggest a more efficient method for evaluating the log-determinant function and verifying positive definiteness, as presented in \cite{hsieh2013big}. This method, which leverages Schur complements and sparse linear system solving, can tackle problems of up to $10^6$ dimensions. Furthermore, it is worthwhile to investigate more efficient strategies for computing an approximate log-determinant function. In this context, the approach proposed in \cite{han2015large} offers a nearly linear scaling of execution time with the number of non-zero entries, while maintaining a high level of accuracy.


In summary, our algorithm has an overall complexity of $O(p^3)$ per iteration. BCD-type algorithms \cite{slawski2015estimation, egilmez2017graph, pavez2016generalized} need $O(p^4)$ operations per cycle, while projected quasi-Newton with L-BFGS \cite{kim2010tackling} requires $O((m + p) p^2)$ operations per iteration, with $m$ as the stored iteration count for Hessian approximation. The PPA algorithm \cite{deng2020fast} requires computing an inexact Newton direction from a $p^2 \times p^2$ linear system during each inner loop iteration, with the exact complexity not established. In addition, our algorithm, gradient projection and BCD-type methods \cite{slawski2015estimation, egilmez2017graph, pavez2016generalized} need $O(p^2)$ memory costs, while projected quasi-Newton with L-BFGS \cite{kim2010tackling} requires $O(m p^2)$ and PPA \cite{deng2020fast} demands $O(p^4)$.

\section{Convergence Analysis}\label{sec-convergence}

Prior to delving into the convergence analysis, we first establish the uniqueness of the minimizer for Problem \eqref{MLE} and determine the sufficient and necessary conditions for a point to be the minimizer.

\begin{theorem}\label{proposition:unique}
The minimizer of Problem \eqref{MLE} is unique, and a point $\bm X^\star \in \mathcal{M}^p$ is the minimizer if and only if it satisfies
\begin{equation}\label{opt}
[\bm X^\star]_{ij} = 0 \ \  \forall \, (i,j) \in \mathcal{E}, \quad [ \nabla f (\bm X^\star)]_{\mathcal{V}\backslash \mathcal{E}} \leq \bm 0, \quad \text{and} \quad [ \nabla f (\bm X^\star )]_{\mathcal{V}^c} = \bm 0, 
\end{equation}
where $\mathcal{V} = \lbrace (i, j) \in [p]^2 \, | \, [ \bm X^\star]_{ij} = 0 \rbrace$.
\end{theorem}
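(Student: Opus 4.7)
My plan is to treat existence and uniqueness first, then characterize the minimizer via the standard first-order variational inequality for a convex program with linear constraints, and finally verify sufficiency by the same inequality.

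On $\mathcal{U}^p$ defined in \eqref{eq:feasible-set} every off-diagonal entry is nonpositive, so the $\ell_1$ term in $f$ simplifies to $-\mathrm{tr}(\bm \lambda \bm X)$ with $\bm \lambda$ as in \eqref{eq:Lambda}; hence $f$ coincides on $\mathcal{U}^p$ with the smooth strictly convex function $-\log\det(\bm X) + \mathrm{tr}(\bm X(\bm S-\bm \lambda))$, the strict convexity coming from $-\log\det$ on $\mathbb{S}_{++}^p \supseteq \mathcal{U}^p$. Combined with the nonemptiness and compactness of the sublevel set $L_f$ supplied by \cref{lemma:level set} and continuity of $f$, this gives existence of a minimizer, and strict convexity over the convex set $\mathcal{U}^p$ gives uniqueness.

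For necessity of \eqref{opt}, I will use that $\bm X^\star$ is a minimizer of this convex program if and only if $\langle \nabla f(\bm X^\star), \bm Y - \bm X^\star\rangle \geq 0$ for every $\bm Y \in \mathcal{U}^p$. The first condition in \eqref{opt} is a direct consequence of $\bm X^\star \in \mathcal{U}^p$. For each $(i,j) \in \mathcal{V}^c$, the entry $[\bm X^\star]_{ij}$ is either a strictly positive diagonal entry ($i=j$) or a strictly negative off-diagonal entry with $(i,j)\notin \mathcal{E}$; in both cases I will take $\bm Y = \bm X^\star \pm \epsilon(\bm e_i \bm e_j^\top + \bm e_j \bm e_i^\top)$ for sufficiently small $\epsilon > 0$ and remain in $\mathcal{U}^p$, because $\bm X^\star$ is interior to both $\mathbb{S}_{++}^p$ and the relevant inequality. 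These two-sided perturbations then force the equality $[\nabla f(\bm X^\star)]_{ij} = 0$. For each $(i,j) \in \mathcal{V}\setminus\mathcal{E}$, we have $X^\star_{ij}=0$ with $i\neq j$ and no equality constraint, so only one-sided symmetric perturbations $Y_{ij}=Y_{ji}\leq 0$ remain feasible, producing the inequality $[\nabla f(\bm X^\star)]_{ij}\leq 0$.

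For sufficiency, given any $\bm X^\star \in \mathcal{M}^p$ satisfying \eqref{opt}, I will split the inner product $\langle \nabla f(\bm X^\star), \bm Y - \bm X^\star\rangle$ over the three disjoint sets $\mathcal{V}^c$, $\mathcal{E}$, and $\mathcal{V}\setminus\mathcal{E}$, which together partition $[p]^2$ since $\mathcal{E}\subseteq\mathcal{V}$. The contribution over $\mathcal{V}^c$ vanishes by the equality condition, the contribution over $\mathcal{E}$ vanishes because $Y_{ij}=X^\star_{ij}=0$, and over $\mathcal{V}\setminus\mathcal{E}$ each term $[\nabla f(\bm X^\star)]_{ij}(Y_{ij}-X^\star_{ij}) = [\nabla f(\bm X^\star)]_{ij}Y_{ij}$ is nonnegative because both factors are nonpositive. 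Thus the variational inequality holds and $\bm X^\star$ is the unique minimizer. The only step that requires real care is verifying that the two-sided perturbations used to extract the equality on $\mathcal{V}^c$ remain in $\mathcal{U}^p$; this relies on openness of $\mathbb{S}_{++}^p$ and strict inactivity of the $X_{ij}\leq 0$ constraint at those indices, which is also the structural reason the $\mathcal{V}^c$-condition is an equality while the $\mathcal{V}\setminus\mathcal{E}$-condition is only an inequality. Everything else is routine KKT bookkeeping.
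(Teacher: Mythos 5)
Your proof is correct and reaches the same characterization, but via a genuinely different route than the paper. The paper writes out the Lagrangian \eqref{eq:lagrangian-1}, invokes Slater's condition, derives the four KKT conditions \eqref{TT1-kkt1}--\eqref{TT1-kkt4}, and then verifies (in both directions) the equivalence between those four conditions and \eqref{opt} by explicitly constructing the multiplier $\bm Y^\star = -\nabla f(\bm X^\star)$. You instead work entirely on the primal side, using the variational-inequality characterization $\langle \nabla f(\bm X^\star), \bm Y - \bm X^\star\rangle \geq 0$ for all $\bm Y \in \mathcal{U}^p$, which is available here precisely because (as you note) the nonsmooth $\ell_1$ term collapses to a linear term on $\mathcal{U}^p$, so $f$ is a smooth strictly convex function over a convex set. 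Your necessity argument via two-sided symmetric perturbations at inactive indices and one-sided perturbations at indices in $\mathcal{V}\setminus\mathcal{E}$ is sound (the perturbation $\bm X^\star \pm \epsilon(\bm e_i\bm e_j^\top + \bm e_j\bm e_i^\top)$ stays in $\mathcal{U}^p$ by openness of $\mathbb{S}_{++}^p$ and strict inactivity of the sign constraint), and your sufficiency argument correctly partitions $[p]^2 = \mathcal{V}^c \cup \mathcal{E} \cup (\mathcal{V}\setminus\mathcal{E})$ and shows each block contributes nonnegatively. The existence and uniqueness portions coincide with the paper's (Weierstrass plus \cref{lemma:level set}, then strict convexity of $-\log\det$). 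What the variational-inequality route buys is economy: it avoids introducing and bookkeeping a dual variable and sidesteps the appeal to Slater's condition. What the paper's KKT route buys is that the explicit multiplier $\bm Y^\star$ is a useful object in its own right, and the same KKT system is reused verbatim in the proof of \cref{theorem_1} in \cref{sec_proof_theorem1}, so the paper's choice keeps the two proofs structurally parallel.
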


The proof of Theorem~\ref{proposition:unique} is available in Appendix~\ref{proof-unique}. The following theorem shows that our algorithm converges to the minimizer of Problem \eqref{MLE}.
\begin{theorem}\label{theorem_conv}
The sequence $\left\lbrace \bm X_k \right\rbrace$ generated by \cref{algo} with $\bm X_0 \in L_f$ converges to the minimizer $\bm X^\star$ of Problem \eqref{MLE}, with $\left\lbrace f(\bm X_k) \right\rbrace$ monotonically decreasing.
\end{theorem}

The proof of Theorem~\ref{theorem_conv} is available in Appendix~\ref{proof-conv}. It is worth noting that constructing an initial point $\bm X_0 \in L_f$, as defined in \eqref{Level set}, is straightforward. Please refer to the proof of Theorem~\ref{theorem_conv} for more details on this. The theoretical analysis on support set convergence and sequence convergence rate relies on the following assumption.

\begin{assumption}\label{assumption_gradient}
For the minimizer $\bm X^\star$ of Problem \eqref{MLE}, we assume that the gradient of the objective function $f$ at $\bm X^\star$ satisfies
\begin{equation*}
\big[ \nabla f(\bm X^\star) \big]_{ij} < 0, \quad \forall \, (i,j) \in \mathcal{V} \setminus \mathcal{E},
\end{equation*}
where $\mathcal{V} = \big\lbrace (i, j) \in [p]^2 \, \big | \, \big[ \bm X^\star\big]_{ij} = 0 \big\rbrace$, and $\mathcal{E}$ is the disconnectivity set.
\end{assumption}

\begin{theorem}\label{theorem_support}
Under Assumption~\ref{assumption_gradient}, the set of free variables $\mathcal{I}_k^c$ generated by \cref{algo} converges to the support of the minimizer $\bm X^\star$ of Problem \eqref{MLE} in finite iterations. In other words, there exists some $k_o \in \mathbb{N}_+$ such that $\mathcal{I}_k^c =\mathrm{supp}(\bm X^\star)$ for any $k \geq k_o$.
\end{theorem}

The proof of Theorem~\ref{theorem_support} is provided in Appendix~\ref{proof-support}. Theorem~\ref{theorem_support} demonstrates that the set of \textit{free} variables constructed by our variable partitioning scheme can exactly identify the support of $\bm X^\star$ in finite iterations. Now we establish the convergence rate of our algorithm. Define
\begin{equation}\label{def_Rk}
\bm R_k = \big[ \bm H_k \big]_{\mathcal{I}_k^c \mathcal{I}_k^c}  -  \big[ \bm H_k \big]_{\mathcal{I}_k^c \mathcal{I}_k} \big[ \bm H_k \big]_{\mathcal{I}_k \mathcal{I}_k}^{-1} \big[ \bm H_k \big]_{\mathcal{I}_k^c \mathcal{I}_k}^\top.
\end{equation}
\begin{theorem}\label{corollary-convere-rate}
Under Assumption~\ref{assumption_gradient}, the sequence $\left\lbrace \bm X_k \right\rbrace$ generated by \cref{algo} satisfies
\begin{equation*}
\underset{ k \to \infty}{\lim \sup\limits}  \frac{\big \| \bm X_{k+1} - \bm X^\star \big \|_{\bm M_k}^2}{\big \| \bm X_{k} - \bm X^\star \big \|_{\bm M_k}^2} \leq \underset{ k \to \infty}{\lim \sup\limits} \, \bigg( 1 - \min \Big( m_k, \ \frac{2 (1-\alpha) \beta m_k}{M_k } \Big) \bigg)^2,
\end{equation*}
where $m_k$ and $M_k$ as the smallest and largest eigenvalues of $\bm R_k^{-\frac{1}{2}} \big[ \bm H_k \big]_{\mathcal{I}_k^c \mathcal{I}_k^c} \bm R_k^{-\frac{1}{2}}$, respectively.
\end{theorem}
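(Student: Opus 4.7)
The plan is to reduce everything to an asymptotic analysis of a scaled gradient iteration on the free variables. First, I would invoke \cref{theorem_conv} to assert that for all $k \geq k_o$ the support is frozen, namely $\mathrm{supp}(\bm X_k) = \mathcal{I}_k^c = \mathrm{supp}(\bm X^\star)$, so $[\bm X_k]_{\mathcal{I}_k} = [\bm X^\star]_{\mathcal{I}_k} = \bm 0$. Combined with the strict-interior estimate \eqref{X-bound1}, the free entries stay uniformly bounded away from zero, so for $k$ large the projection $\mathcal{P}_{\Omega}$ is inactive on the updated free coordinates and the iteration collapses to the unconstrained scaled-gradient form \eqref{scaled_iterate},
\begin{equation*}
u_{k+1} = u_k - \gamma_k \bm R_k^{-1} g_k, \qquad u_k := [\bm X_k - \bm X^\star]_{\mathcal{I}_k^c}, \quad g_k := [\nabla f(\bm X_k)]_{\mathcal{I}_k^c}.
\end{equation*}
Because the permuted $\bm M_k$ is block-diagonal with blocks $\bm R_k$ and $\bm D_k^{-1}$ (from \eqref{def_M}) and the restricted coordinates of $\bm X_k - \bm X^\star$ and $\bm X_{k+1} - \bm X^\star$ vanish once the support stabilizes, the $\bm M_k$-norm collapses to $\|\bm X_k - \bm X^\star\|_{\bm M_k}^2 = \|u_k\|_{\bm R_k}^2$, and similarly for $k+1$, so the ratio in the theorem equals $\|u_{k+1}\|_{\bm R_k}^2/\|u_k\|_{\bm R_k}^2$.

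Second, I would linearize about $\bm X^\star$. The first-order optimality in \cref{proposition:unique} gives $[\nabla f(\bm X^\star)]_{\mathcal{I}_k^c} = \bm 0$ and $[\bm X_k - \bm X^\star]_{\mathcal{I}_k} = \bm 0$, so a Taylor expansion of the smooth map $\bm X \mapsto -\bm X^{-1} + \bm S - \bm \Lambda$ gives $g_k = [\bm H_k]_{\mathcal{I}_k^c \mathcal{I}_k^c}\, u_k + o(\|u_k\|)$, with the remainder controlled uniformly on the compact set $L_f$ of \cref{lemma:level set}. Changing variables to $v_k := \bm R_k^{1/2} u_k$ to symmetrize the update yields $v_{k+1} = (I - \gamma_k \bm A_k) v_k + o(\|v_k\|)$ with $\bm A_k = \bm R_k^{-1/2}[\bm H_k]_{\mathcal{I}_k^c \mathcal{I}_k^c}\bm R_k^{-1/2}$ symmetric and having spectrum in $[m_k, M_k]$, and $\|v_k\|_2 = \|u_k\|_{\bm R_k}$. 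Consequently,
\begin{equation*}
\frac{\|u_{k+1}\|_{\bm R_k}^2}{\|u_k\|_{\bm R_k}^2} \leq \|I - \gamma_k \bm A_k\|_2^2 + o(1),
\end{equation*}
and it remains to bound the spectral norm through the accepted step size.

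Finally, I would lower-bound $\gamma_k$ by revisiting the Armijo argument from \cref{proposition_linesearch} in the asymptotic regime. After support stabilization the second inner product in \eqref{eq:line-search} vanishes, and a second-order expansion of $f(\bm X_{k+1})$ shows that \eqref{eq:line-search} is asymptotically implied by $\gamma_k \leq 2(1-\alpha)/M_k$. Since backtracking tries $1,\beta,\beta^2,\ldots$ and must reject $\gamma_k/\beta$ whenever $\gamma_k<1$, the accepted step satisfies $\gamma_k \geq \gamma_k^\star := \min(1,\, 2(1-\alpha)\beta/M_k)$ for $k$ large. In this step-size window $\gamma_k^\star \bm A_k \preceq I$, so $I - \gamma_k^\star \bm A_k \succeq \bm 0$ and its spectral norm equals $1 - \gamma_k^\star m_k = 1 - \min(m_k,\, 2(1-\alpha)\beta m_k/M_k)$. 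Squaring this bound, taking $\limsup$, and absorbing the vanishing $o(1)$ perturbation yields the stated inequality. The main obstacle, I expect, is the careful bookkeeping of the Taylor remainders together with verifying that the projection is uniformly inactive on the free block after support identification; the former is dispatched by uniform continuity of $\nabla^2 f$ on the compact set $L_f$, and the latter relies on the uniform lower bound \eqref{X-bound1} inside \cref{theorem_conv}.
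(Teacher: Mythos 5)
Your proposal follows the paper's high-level route: after \cref{theorem_conv} fixes the support, reduce to the unconstrained scaled-gradient iterate \eqref{scaled_iterate} on the free block, use block-diagonality of $\bm M_k$ to collapse $\|\cdot\|_{\bm M_k}$ to $\|\cdot\|_{\bm R_k}$ on $\mathcal{I}_k^c$, and estimate the contraction factor via the Armijo step size. Where you diverge is that you re-derive the asymptotic rate from scratch (Taylor-expand $\nabla f$ at $\bm X^\star$, symmetrize with $v_k=\bm R_k^{1/2}u_k$ to expose $\bm A_k = \bm R_k^{-1/2}[\bm H_k]_{\mathcal{I}_k^c\mathcal{I}_k^c}\bm R_k^{-1/2}$), whereas the paper simply invokes the scaled-gradient rate result from \cite{bertsekas2016nonlinear} and the step-size analysis from \cite{bertsekas1976goldstein}. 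Your self-contained derivation up to the bound $\|u_{k+1}\|_{\bm R_k}^2 / \|u_k\|_{\bm R_k}^2 \le \|I-\gamma_k\bm A_k\|_2^2 + o(1)$ is sound and spells out what the paper leaves to the citation.

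However, the last step has a genuine gap. You pass from $\|I-\gamma_k\bm A_k\|_2$ to $\|I-\gamma_k^\star\bm A_k\|_2$ using only $\gamma_k \ge \gamma_k^\star$, but $\gamma \mapsto \|I-\gamma\bm A_k\|_2 = \max\big(|1-\gamma m_k|,\,|1-\gamma M_k|\big)$ is not monotone in $\gamma$, so a one-sided bound on the accepted step does not control the spectral norm. In addition, the PSD claim $\gamma_k^\star\bm A_k\preceq\bm I$ is false in general: $\bm R_k$ is a Schur complement of $\bm H_k$, hence $\bm R_k \preceq [\bm H_k]_{\mathcal{I}_k^c\mathcal{I}_k^c}$ and so $\bm A_k \succeq \bm I$, forcing $m_k \ge 1$ and $M_k \ge 1$. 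When $\gamma_k^\star = 1$ this gives $\gamma_k^\star M_k = M_k \ge 1$ with equality only in the degenerate case $\bm A_k = \bm I$; and when $\gamma_k^\star = 2(1-\alpha)\beta/M_k$ you still need $2(1-\alpha)\beta \le 1$, a restriction on $(\alpha,\beta)$ that the theorem does not impose. To close the argument you would need the more careful Armijo-rate bookkeeping from \cite{bertsekas2016nonlinear} (which also gives an asymptotic upper bound $\gamma_k \le \min(1, 2(1-\alpha)/M_k)$ on the accepted step and handles the $M_k$-eigendirection), rather than the symmetric-spectral shortcut you take; as written, the squaring-and-limsup step does not follow.
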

The proof of Theorem~\ref{corollary-convere-rate} is provided in Appendix~\ref{proof-rate}. Theorem~\ref{corollary-convere-rate} reveals that the convergence rate of our algorithm depends on the condition number $m_k /M_k$ of $\bm R_k^{-\frac{1}{2}} [ \bm H_k ]_{\mathcal{I}_k^c \mathcal{I}_k^c} \bm R_k^{-\frac{1}{2}}$. Replacing $\bm R_k$ with an identify matrix, (\textit{i.e.}, using the projected gradient method) results in a rate dependent on the condition number of $[ \bm H_k ]_{\mathcal{I}_k^c \mathcal{I}_k^c}$. The condition number of $\bm R_k^{-\frac{1}{2}} [ \bm H_k ]_{\mathcal{I}_k^c \mathcal{I}_k^c} \bm R_k^{-\frac{1}{2}}$ could be larger, as $\bm R_k$ could approximate $[ \bm H_k ]_{\mathcal{I}_k^c \mathcal{I}_k^c}$ well. Thus, the gradient scaling matrix $\bm R_k^{-1}$, \textit{i.e.}, $[ \bm H_k^{-1} ]_{\mathcal{I}_k^c \mathcal{I}_k^c}$ in \eqref{def_M}, leads our algorithm to converge faster than the projected gradient method. 

It is important to note that our algorithm generally does not achieve superlinear convergence, despite the incorporation of second-order information. Superlinear convergence necessitates that the inverse gradient scaling matrix progressively approximate the Hessian at the minimizer \cite{nocedal2006numerical}. However, this is a condition that our constructed scaling matrix does not meet. Despite this, we should note that constructing a search direction to achieve superlinear convergence proves significantly more computationally demanding than our approach, as it cannot leverage the special structure of the Hessian to decrease the computational load.

\section{Experimental Results}\label{sec-experiment}

We conduct experiments on synthetic and real-world data to verify the performance of our algorithm. All experiments were conducted on 2.10GHZ Xeon Gold 6152 machines and Linux OS, and all methods were implemented in MATLAB. State-of-the-art methods for comparisons include:
\setlist{nolistsep}
\begin{itemize}[leftmargin=0.55cm]
\vspace{-0.08cm}
\setlength\itemsep{0.3em}
\item BCD \cite{slawski2015estimation}: Updates each column/row of primal variable using a nonnegative quadratic program.
\item optGL \cite{pavez2016generalized}: Similar to BCD but solves nonnegative quadratic programs on the dual variable.
\item GGL \cite{egilmez2017graph}: Similar to BCD but handles disconnectivity constraints, while BCD and optGL cannot.
\item PGD \cite{ying2023adaptive}: Projected gradient descent method with backtracking line search.
\item APGD \cite{nesterov2003introductory}: Accelerated projected gradient algorithm with extrapolation step.
\item PPA \cite{deng2020fast}: Inexact proximal point algorithm with Newton-CG method.
\item PQN-LBFGS\cite{kim2010tackling}: Projected quasi-Newton method using limited-memory BFGS.
\end{itemize}
\vspace{-0.05cm}
Note that all state-of-the-art methods listed above can converge to the minimizer of Problem \eqref{MLE}, and we focus on the comparison of computational time for those methods. To that end, we report the relative error of the objective function value as a function of the run time, which is calculated by
\begin{equation}\label{rel_error}
\begin{gathered}
|f(\bm X_k) - f(\bm X^\star) | / | f(\bm X^\star) |,
\end{gathered}
\end{equation}
where $f$ is the objective function of Problem~\eqref{MLE}, and $\bm X^\star$ is its minimizer. The $\bm X^\star$ is computed by running the state-of-the-art method \textsf{GGL} \cite{egilmez2017graph} until it converges to a point $\bm X_k \in \mathcal{M}^p$ satisfying
\begin{equation}\label{eopt}
\begin{gathered}
{[\bm X_k ]}_{ij} = 0  \quad  \forall \, (i,j) \in \mathcal{E}, \qquad {[ \nabla f (\bm X_k )]}_{\mathcal{A}\backslash \mathcal{E}} \leq \bm 0, \qquad  \| {[ \nabla f (\bm X^\star) ]}_{\mathcal{A}^c} \|_{\max} \leq 10^{-8}, 
\end{gathered}
\end{equation}
where $\mathcal{A}: = \lbrace (i, j) \in [p]^2 \, \big| \, | {[ \bm X_k ]}_{ij} | \leq 10^{-8} \rbrace$. Through the comparison with the sufficient and necessary conditions of the unique minimizer of Problem \eqref{MLE} presented in Theorem~\ref{proposition:unique}, we can see that any point $\bm X_k$ satisfying the conditions in \eqref{eopt} is sufficiently close to the minimizer.

\begin{figure}[!htb]
    \captionsetup[subfigure]{justification=centering}
    \centering
      \begin{subfigure}[t]{0.331\textwidth}
        \centering
        \includegraphics[scale=.2]{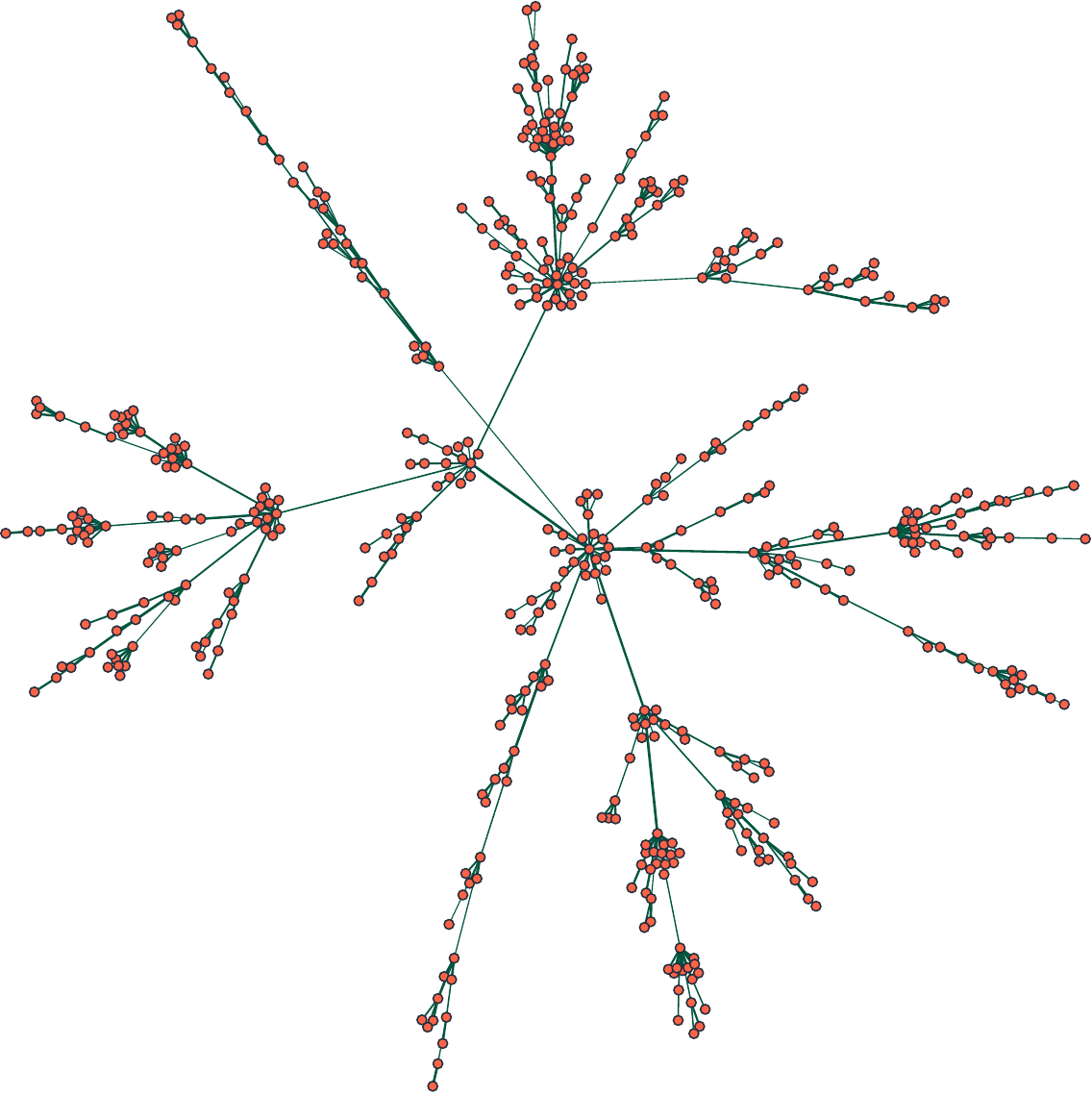}
        \caption{BA graph of degree one}
    \end{subfigure}%
    \begin{subfigure}[t]{0.331\textwidth}
        \centering
        \includegraphics[scale=.2]{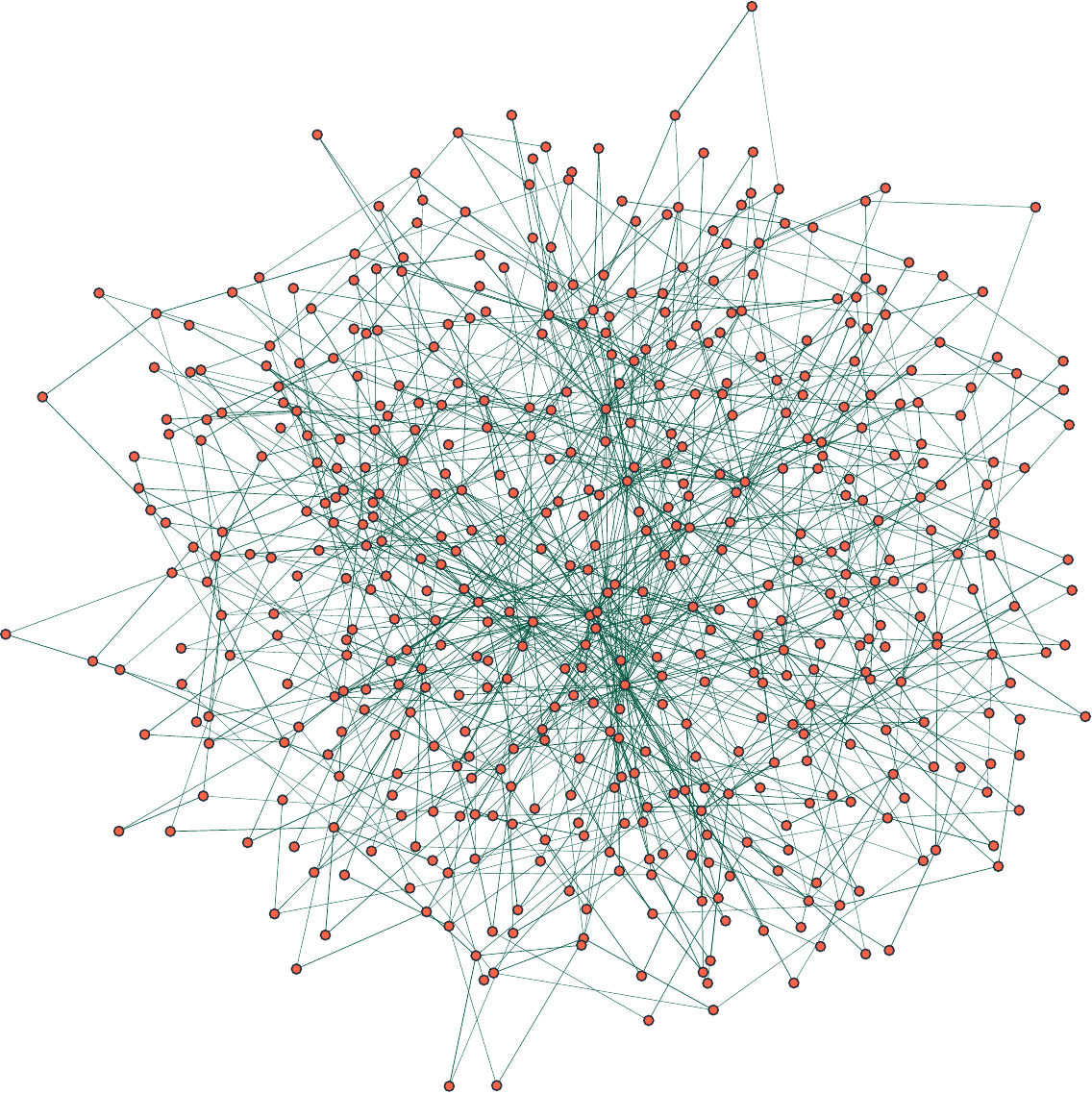}
        \caption{BA graph of degree two}
    \end{subfigure}%
    \begin{subfigure}[t]{0.331\textwidth}
        \centering
        \includegraphics[scale=.315]{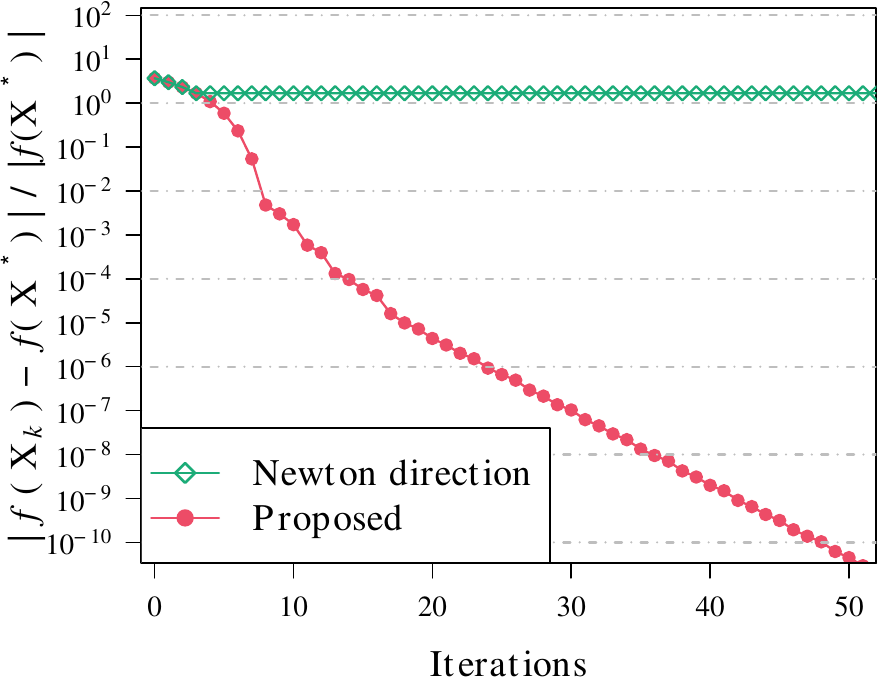}
        \caption{Convergence comparison}
    \end{subfigure}%
    \caption{Illustration of BA graphs of degree one (a) and degree two (b) with $500$ nodes. (c) Convergence comparison between our algorithm and the Newton direction from Proposition~\ref{lem1}.} \label{fig:exaple-graph}
    \vspace{-0.25cm}
\end{figure}


\subsection{Synthetic data}\label{sec_synthetic}

We generate independent samples $\bm y^{(1)}, \ldots, \bm y^{(n)} \in \mathbb{R}^p$ from a multivariate Gaussian distribution with zero mean and precision matrix $\bm \Theta$, where $\bm \Theta \in \mathcal{M}^p$ is the underlying precision matrix associated with a graph consisting of $p$ nodes.  We use the Barab{\'a}si–Albert (BA) model \cite{barabasi1999emergence} to generate the support of the underlying precision matrix. To help readers to know well the BA graphs, we present two examples in \cref{fig:exaple-graph}. More details about experimental setting are provided in \cref{sec-add-exp}.

\vspace{-0.1cm} 

\subsubsection{Comparisons of search directions}\label{sec-newton-convergence}

\vspace{-0.05cm} 

We evaluate the convergence of algorithms with different search directions. \Cref{fig:exaple-graph} (c) demonstrates that our algorithm, using the direction from Proposition~\ref{lem2}, converges to the minimizer, aligning with our theoretical convergence results in Theorem~\ref{theorem_conv}. In contrast, the algorithm using iterate \eqref{pqn} and the Newton direction from Proposition~\ref{lem1} stops decreasing the objective function value after a few iterations, indicating that this direction cannot be consistently regarded as a descent direction. The algorithm selects the step size through the Armijo rule, \textit{i.e.}, it is continually reduced until a decrease in the objective function is achieved.

\vspace{-0.1cm}

\subsubsection{Comparisons of computational time}\label{sec-syn-time}

\vspace{-0.05cm}

We evaluate the computational time of our algorithm and state-of-the-art methods on synthetic datasets, averaging results over 10 realizations. We plot markers every 10 iterations for PGD, APGD, PQN-LBFGS, and FPN, while marking each cycle of updating all columns/rows for BCD-type algorithms (BCD, GGL, and optGL) and each outer iteration for PPA.

\Cref{fig:BAone} compares the computational time of various methods for solving Problem \eqref{MLE} on BA graphs of degree one. Our proposed FPN significantly outperforms all state-of-the-art methods in convergence time for node counts ranging from $1000$ to $5000$. BCD and GGL are efficient at $1000$ nodes, being faster than PGD and APGD, and competitive with PQN-LBFGS and PPA. However, at $5000$ nodes, BCD and GGL become slower due to the $O(p^4)$ operations per cycle required to solve $p$ nonnegative quadratic programs, leading to rapidly increasing computational costs.

\begin{figure}[ht]
    \captionsetup[subfigure]{justification=centering}
    \centering
        \begin{subfigure}[t]{0.331\textwidth}
        \centering
        \includegraphics[scale=.262]{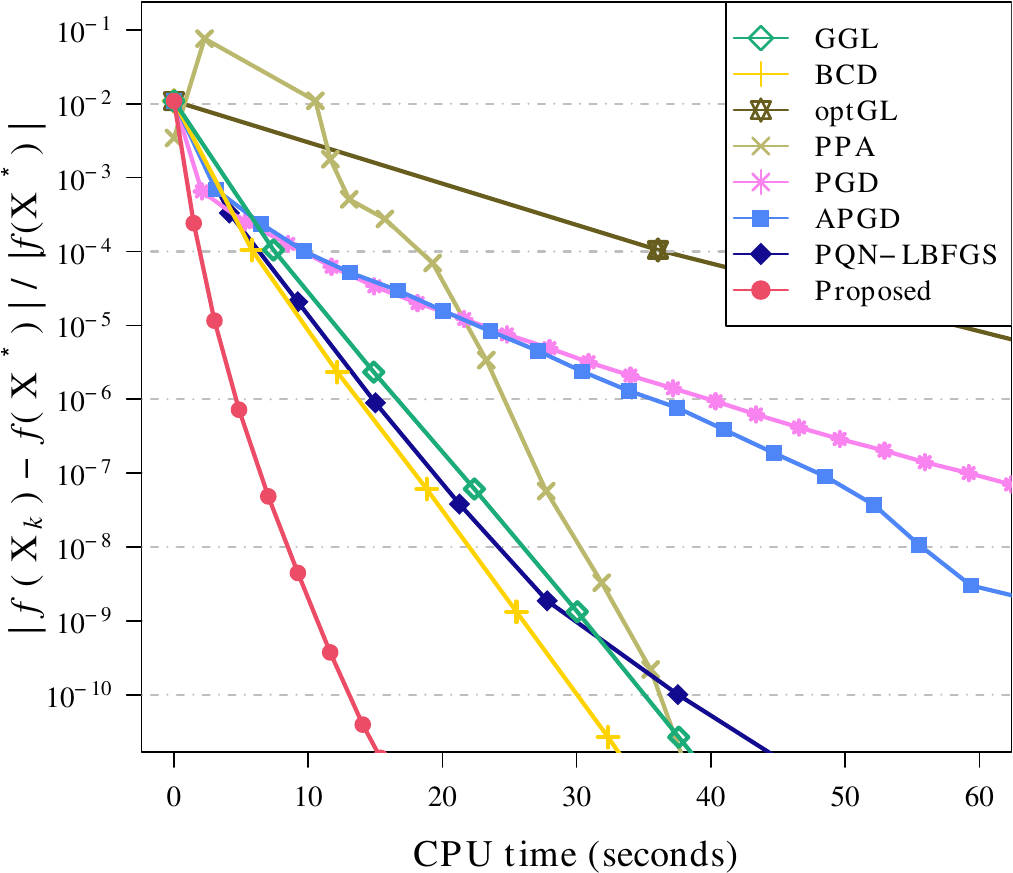}
        \caption{$p=1000$}
    \end{subfigure}%
    \begin{subfigure}[t]{0.331\textwidth}
        \centering
        \includegraphics[scale=.262]{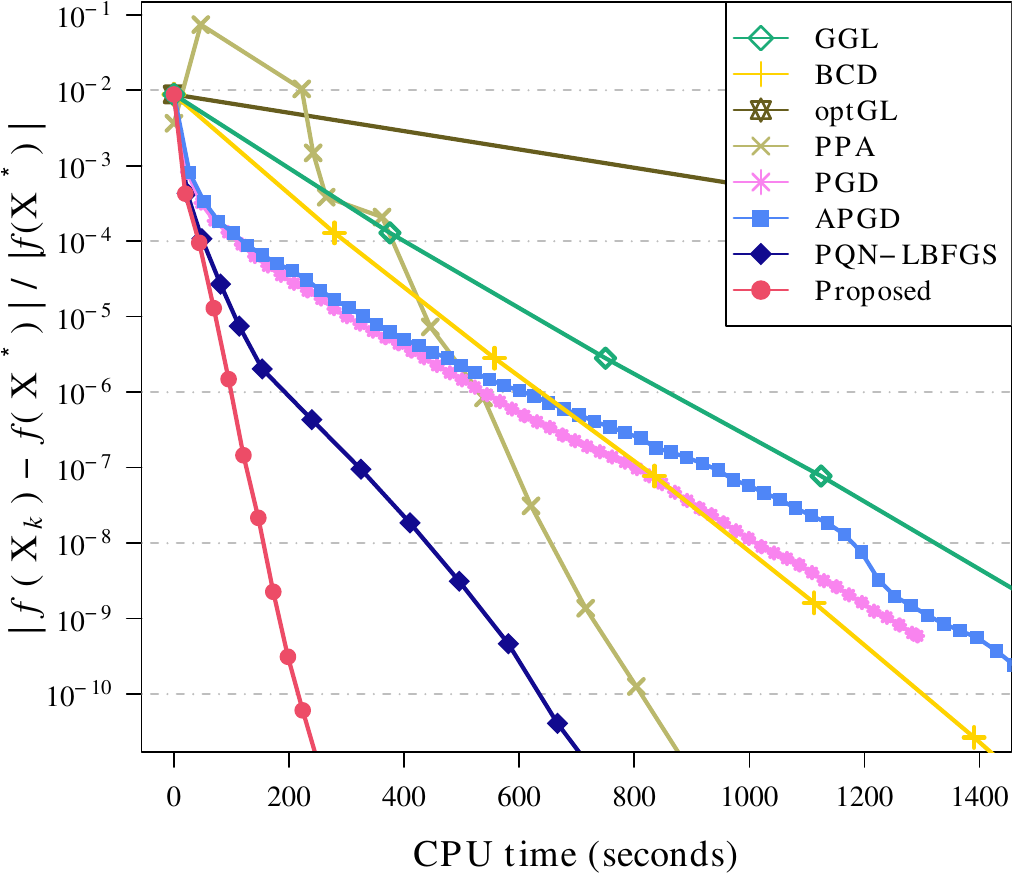}
        \caption{$p=3000$}
    \end{subfigure}%
     \begin{subfigure}[t]{0.331\textwidth}
        \centering
        \includegraphics[scale=.262]{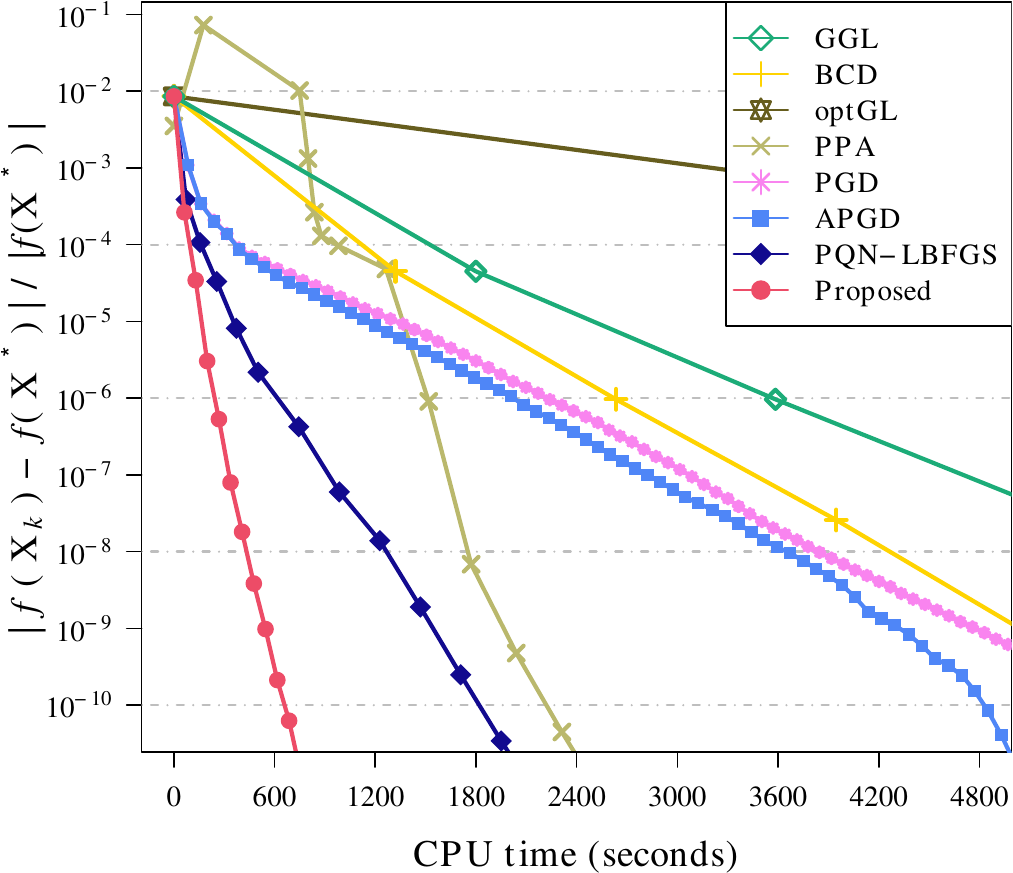}
        \caption{$p=5000$}
    \end{subfigure}%
    \vspace{-0.1cm}
    \caption{Relative errors of the objective function values versus time on BA graphs of degree one.}\label{fig:BAone}
  \vspace{-0.3cm}  
\end{figure} 

\begin{figure}[!htb]
    \captionsetup[subfigure]{justification=centering}
    \centering
        \begin{subfigure}[t]{0.331\textwidth}
        \centering
        \includegraphics[scale=.26]{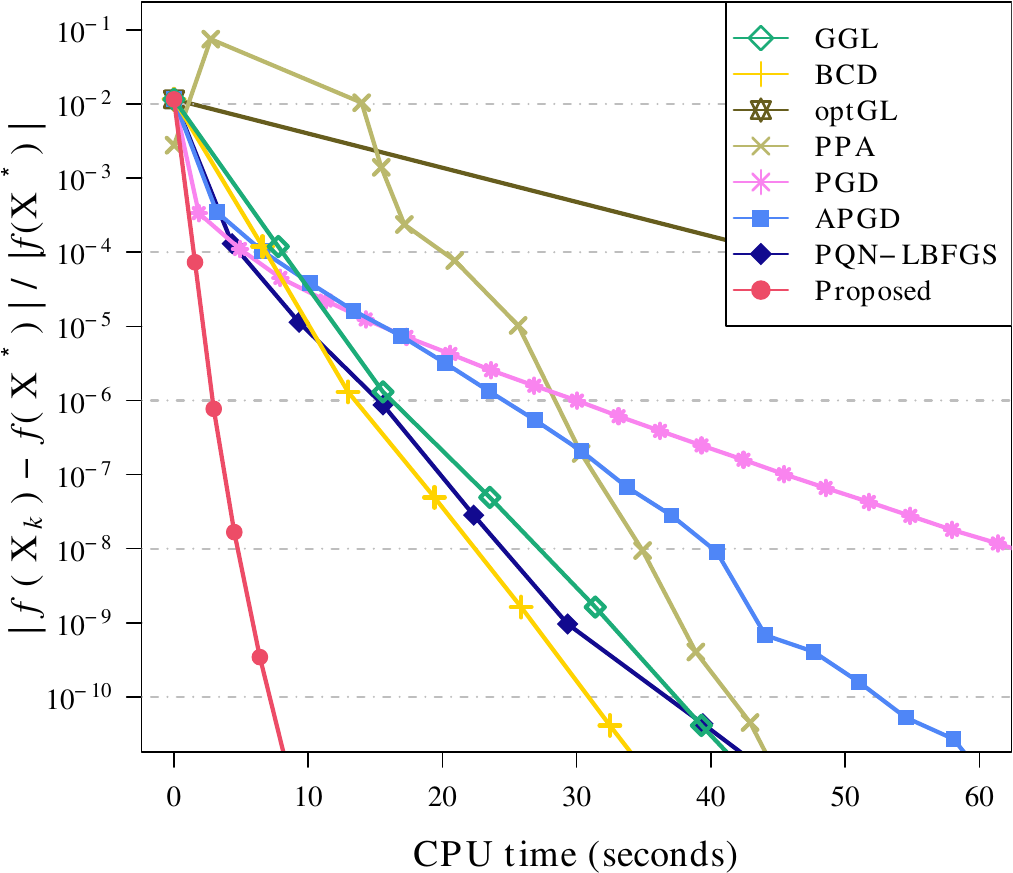}
        \caption{$p=1000$}
    \end{subfigure}%
        \begin{subfigure}[t]{0.331\textwidth}
        \centering
        \includegraphics[scale=.26]{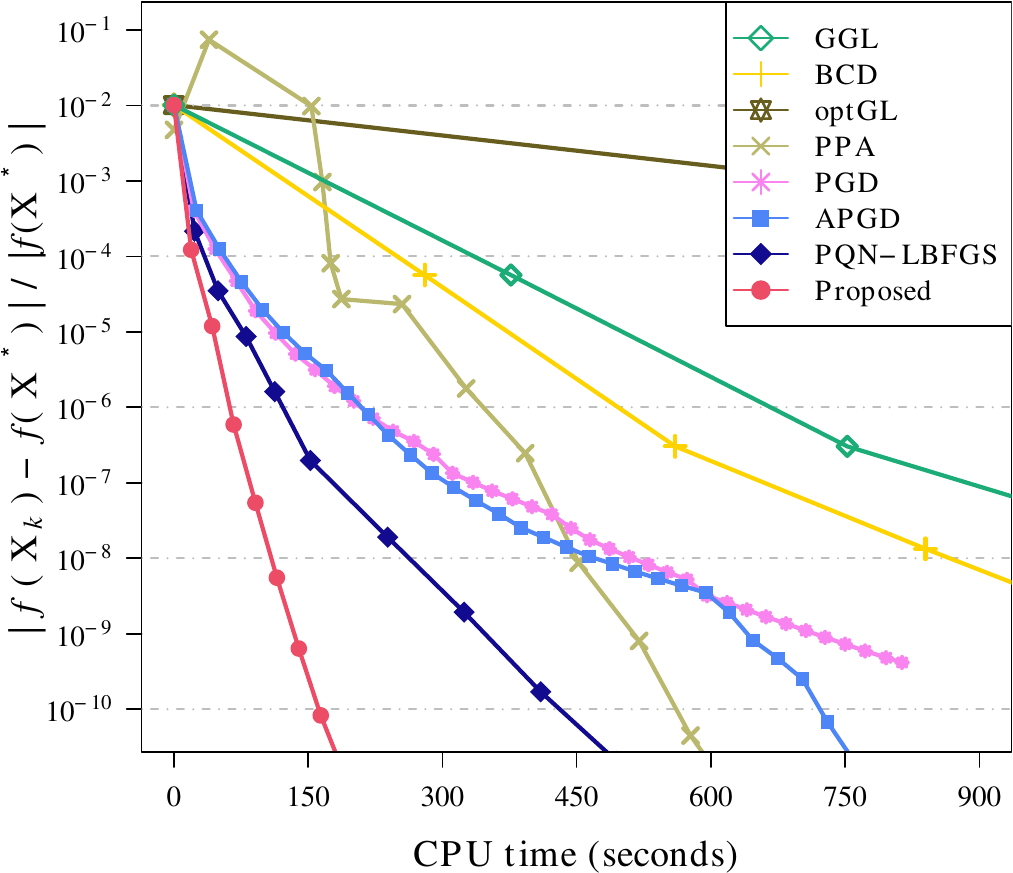}
        \caption{$p=3000$}
    \end{subfigure}%
         \begin{subfigure}[t]{0.331\textwidth}
        \centering
        \includegraphics[scale=.26]{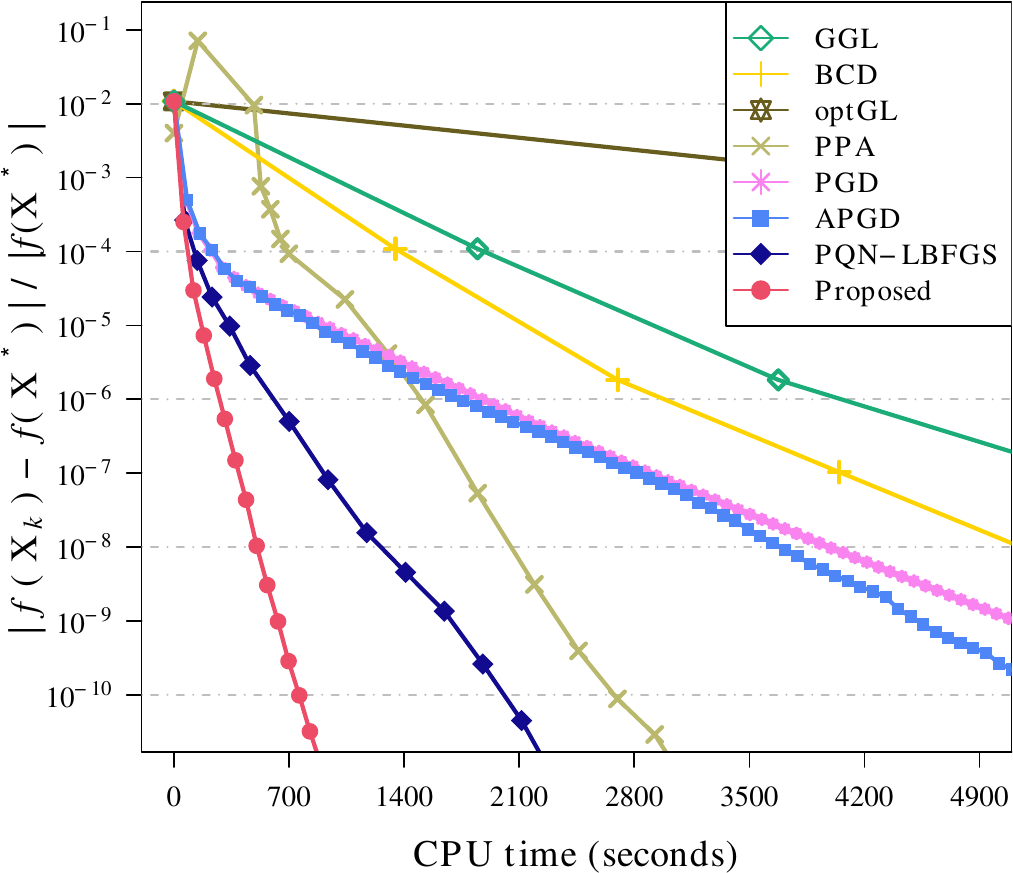}
        \caption{$p=5000$}
    \end{subfigure}%
     \vspace{-0.1cm}
    \caption{ Relative errors of the objective function value versus time on BA graphs of degree two. }\label{fig:BAtwo}
     \vspace{-0.1cm}
\end{figure}

\begin{figure}[!htb]
    \captionsetup[subfigure]{justification=centering}
    \centering
    \begin{subfigure}[t]{0.331\textwidth}
        \centering
        \includegraphics[scale=.26]{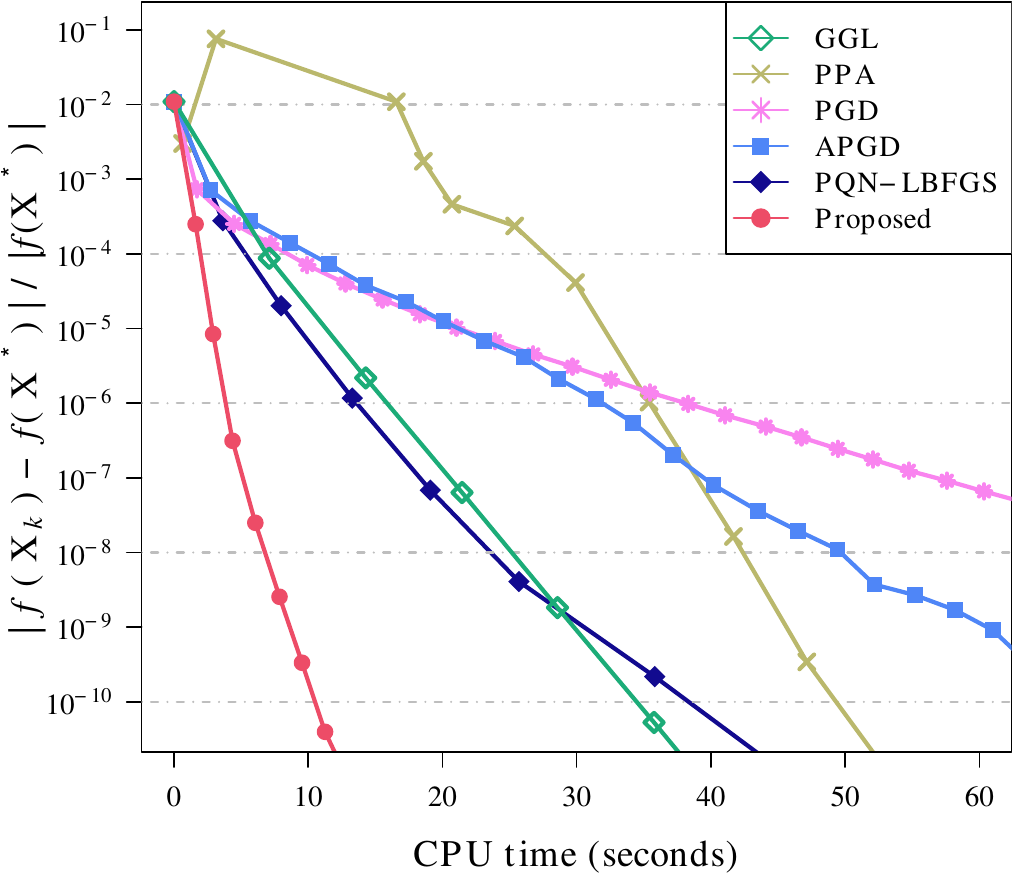}
        \caption{}
    \end{subfigure}%
    \begin{subfigure}[t]{0.331\textwidth}
        \centering
        \includegraphics[scale=.26]{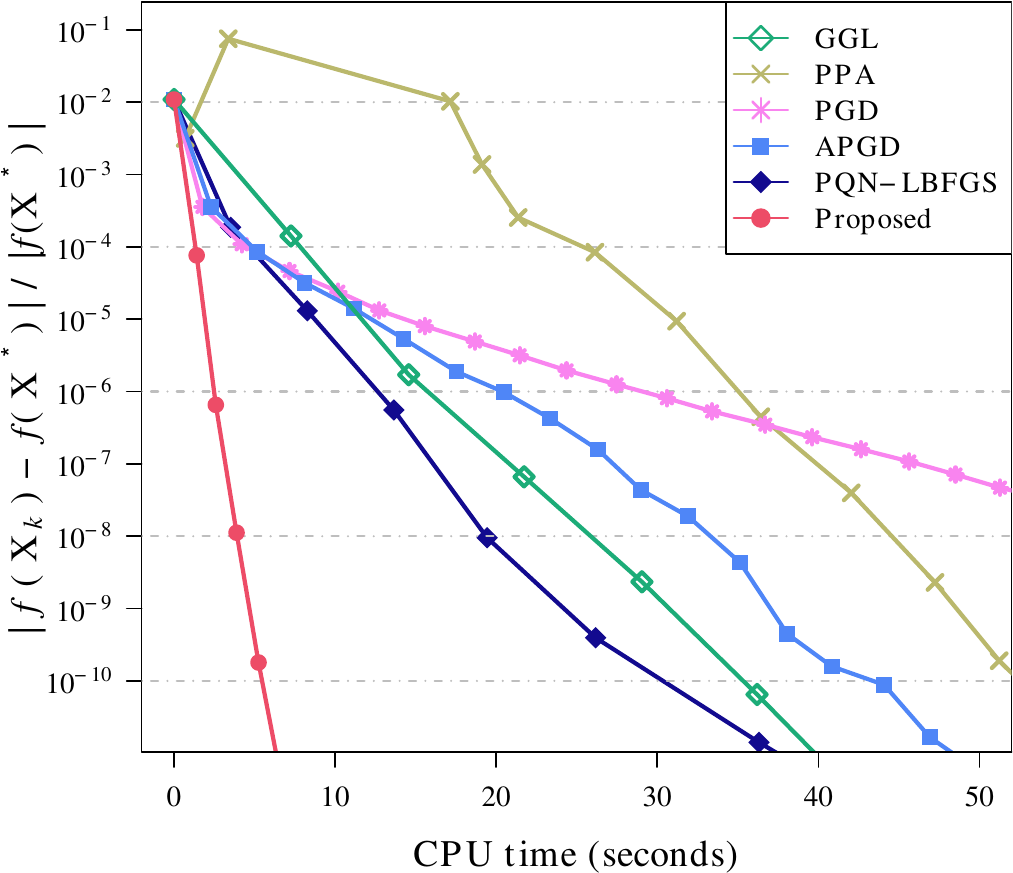}
        \caption{}
    \end{subfigure}%
      \begin{subfigure}[t]{0.331\textwidth}
        \centering
        \includegraphics[scale=.26]{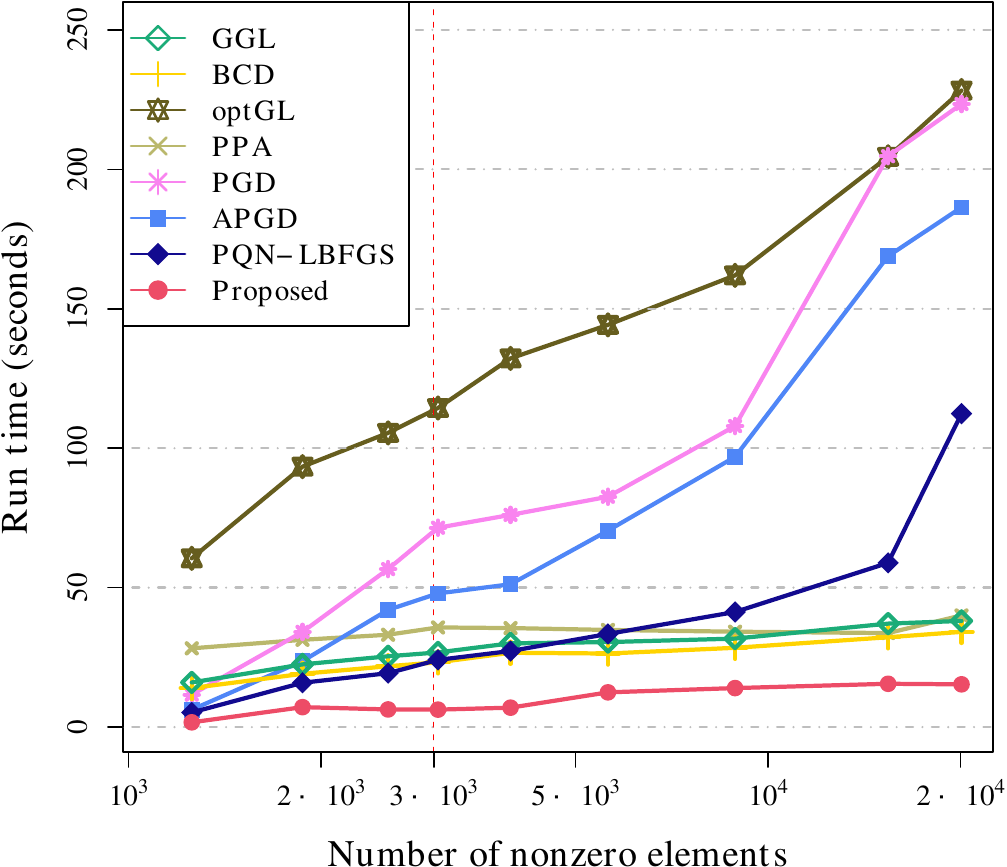}
        \caption{}
    \end{subfigure}%
     \vspace{-0.1cm}
    \caption{Relative errors of objective function values versus time on data sets: (a) BA graph of degree one and (b) BA graph of degree two, imposing disconnectivity constraints. (c) Run time versus numbers of nonzero elements in precision matrices across varying regularization parameter values, with the underlying precision matrix having 2998 nonzero elements (indicated by vertical red line).}\label{fig:sparsity} 
     \vspace{-0.3cm}
\end{figure}

\Cref{fig:BAtwo} presents the computational time of different methods on BA graphs of degree two. As with degree one graphs, FPN outperforms state-of-the-art methods in computational time for varying node counts. PQN-LBFGS and FPN require fewer iterations to converge to the minimizer than PGD and APGD, particularly in high dimensions (\textit{e.g.}, $p=5000$), indicating faster convergence. This is because both PQN-LBFGS and FPN utilize second-order information and approximate Newton direction, overcoming low convergence rates of first-order methods in high-dimensional cases.

\Cref{fig:sparsity} (a) and (b) compare the computational time of various algorithms solving Problem \eqref{MLE} with disconnectivity constraints, where FPN consistently converges fastest. (c) evaluates the impact of the estimated precision matrix's sparsity level on run time. BCD, GGL, PPA, and FPN exhibit stable run time across varying sparsity levels, highlighting their robustness regarding regularization parameter settings, while other methods display increased run time as sparsity decreases.

\vspace{-0.1cm}

\subsection{Real-world data}

\vspace{-0.1cm}

We perform experiments on two real-world datasets: the \textit{concepts} dataset and a financial time-series dataset. For the \textit{concepts} dataset, we compare the computational time of different algorithms solving Problem \eqref{MLE}. The experimental results on the financial time-series dataset are provided in Appendix~\ref{sec-add-exp}, where we examine the performance of our method in graph edge recovery.

The \textit{concepts} dataset \cite{lake2010discovering}, from Intel Labs, comprises 1000 nodes and 218 semantic features, with $p=1000$ and $n=218$. Nodes represent concepts like "house," "coat," and "whale," while semantic features are questions like "Can it fly?", "Is it alive?", and "Can you use it?". Responses, collected via Amazon Mechanical Turk, range from "definitely no" to "definitely yes" on a five-point scale.

\Cref{fig:concept-time} (a) compares the run time of various algorithms solving Problem \eqref{MLE} on the \textit{concepts} dataset. Our proposed algorithm converges to the minimizer considerably faster than state-of-the-art algorithms, which is consistent with the observations in synthetic experiments. Note that all compared algorithms can reach the minimizer of Problem \eqref{MLE}, and thus learn the same graph.

\begin{figure}[!htb]
    \captionsetup[subfigure]{justification=centering}
    \centering
      \begin{subfigure}[t]{0.48\textwidth}
        \centering
       \includegraphics[scale=.33]{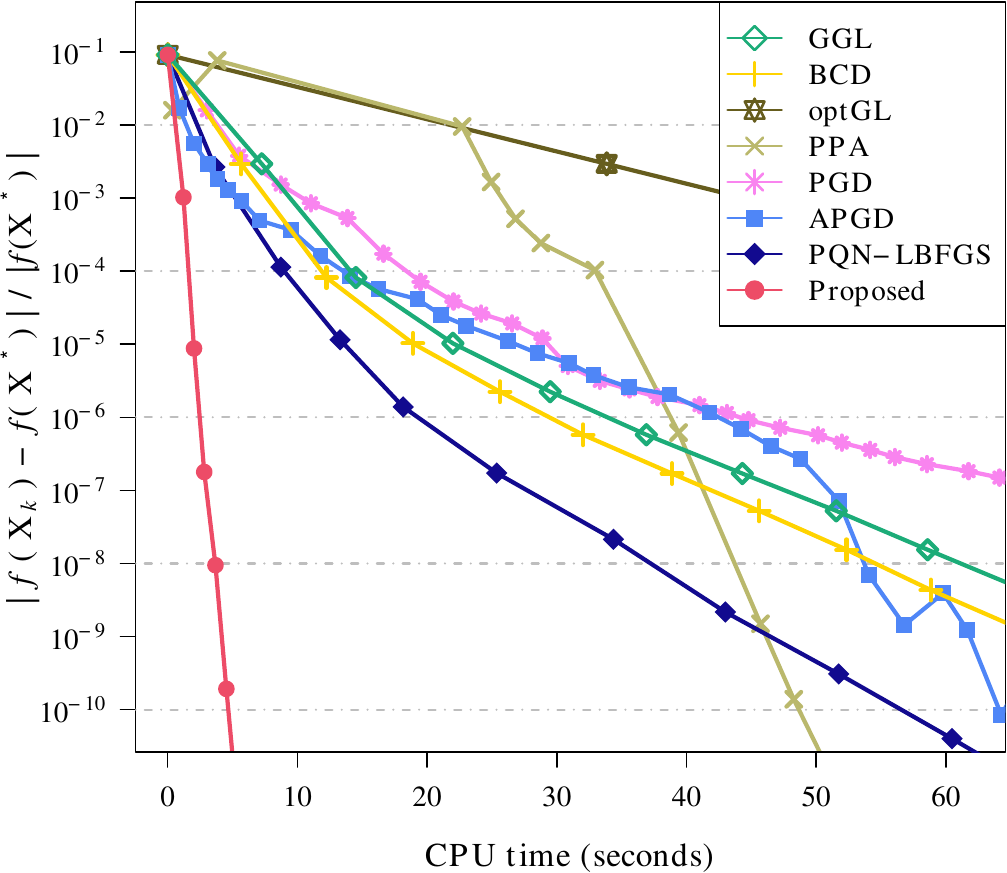}
        \caption{}
    \end{subfigure}%
        \begin{subfigure}[t]{0.51\textwidth}
        \centering
        \includegraphics[scale=.44]{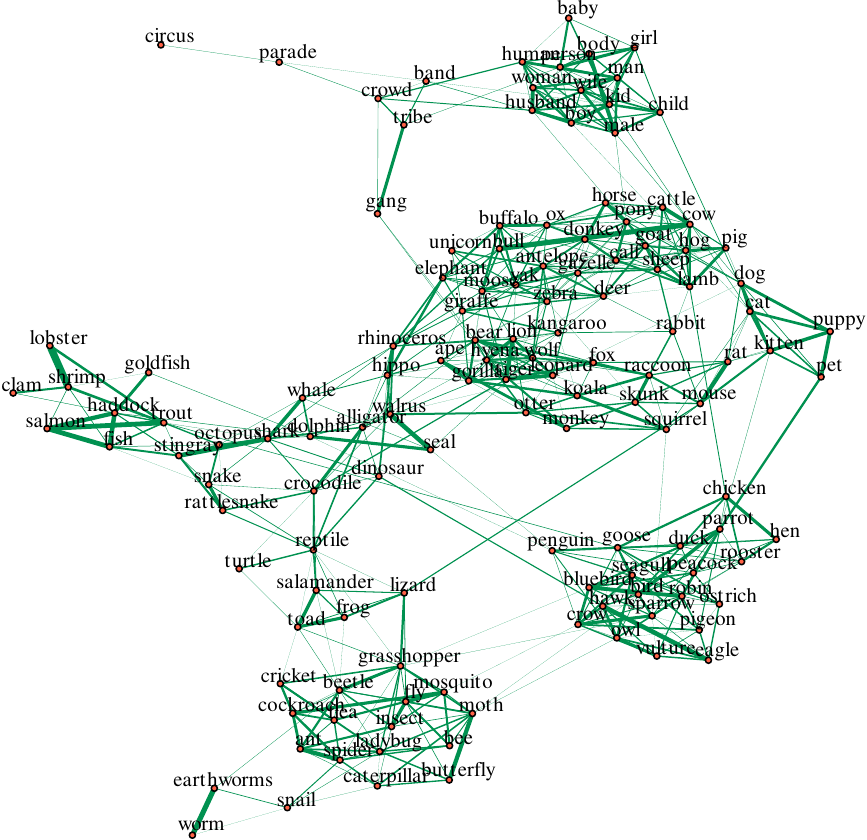}
        \caption{}
    \end{subfigure}%
    \caption{(a) Relative errors of the objective function values versus time for different algorithms in solving Problem \eqref{MLE} on the \textit{concepts} data set, consisting of 1000 nodes. (b) The connected subgraph illustrated by the minimizer on the \textit{concepts} data set, which consists of 132 nodes.}\label{fig:concept-time}
     \vspace{-0.3cm}
\end{figure}

\Cref{fig:concept-time} (b) displays a connected subgraph illustrated by the minimizer of Problem \eqref{MLE}. Interestingly, it is observed that the learned graph forms a semantic network, where related concepts are closely connected. For instance, insect concepts such as ``bee'', ``butterfly'', ``flea'', ``mosquito'', and ``spider'' are grouped together, while human-related concepts like ``baby'', ``husband'', ``child'', ``girls'', and ``man'' form another group. Moreover, the network connects ``penguin'' closely to birds like ``owl' and ``crow'' and sea animals like ``goldfish'' and ``seal'', highlighting its aquatic bird nature. Overall, the learned network effectively captures concept relationships.

\vspace{-0.2cm}

\section{Conclusions and Discussions}\label{conclusions}

\vspace{-0.1cm}

In this paper, we have introduced a fast projected Newton-like method for estimating precision matrices under $\mathrm{MTP}_2$ constraints. Our algorithm, leveraging the two-metric projection method, stands out from existing BCD and PPA-type approaches for addressing the target problem. The proposed algorithm is not only straightforward to implement but also efficient in terms of computation and memory usage. We have provided theoretical convergence analysis and conducted extensive experiments, which clearly demonstrate the superior efficiency of our algorithm in computational time, outperforming state-of-the-art methods. Moreover, we have observed significant performance of our method in terms of \textit{modularity} value on the learned financial time-series graphs.

Finally, we discuss the limitations of our paper. Our algorithm is proven to converge to the minimizer without any assumptions; however, we require Assumption~\ref{assumption_gradient} to establish support set convergence in finite iterations and to determine the convergence rate. This assumption is relatively mild, as Theorem~\ref{proposition:unique} shows that the minimizer $\bm X^\star$ must satisfy $[ \nabla f (\bm X^\star )]_{ij} \leq 0$ for each $(i, j) \in \mathcal{V}\backslash \mathcal{E}$. The only additional requirement in Assumption~\ref{assumption_gradient} is the strictness of this inequality. However, the conditions for ensuring this strictness remain unclear. As this assumption is equivalent to the strict complementary slackness condition in optimization theory, exploring verifiable conditions to guarantee Assumption~\ref{assumption_gradient} could enrich our algorithm's insights.

\vspace{-0.2cm}

\section{Acknowledgements}

\vspace{-0.1cm}

This work was supported by the Hong Kong Research Grants Council GRF 16207820, 16310620, and 16306821, the Hong Kong Innovation and Technology Fund (ITF) MHP/009/20, and the Project of Hetao Shenzhen-Hong Kong Science and Technology Innovation Cooperation Zone under Grant HZQB-KCZYB-2020083. We would also like to thank the anonymous reviewers for their valuable feedback on the manuscript.

\newpage
\appendix
\onecolumn

In the following sections, we provide more details on experimental settings and additional experimental results for financial time-series data in Appendix~\ref{sec-add-exp} and the proofs for Propositions~\ref{lem1}, \ref{lem2}, \ref{proposition_linesearch}, and \ref{proposition_decrease}, as well as Theorems~\ref{proposition:unique}, \ref{theorem_conv}, \ref{theorem_support}, and \ref{corollary-convere-rate} in Appendix~\ref{proofs}.

\section{Additional Experiments}\label{sec-add-exp}

We first provide an in-depth description of the experimental settings, then present numerical experiments carried out on the financial time-series data.

\subsection{Experimental settings} 

We examine Barabasi-Albert (BA) graphs \cite{barabasi1999emergence} as a model for the support structure of the underlying precision matrix. BA models hold a significant position in network science due to their ability to generate random scale-free networks via a \textit{preferential attachment} mechanism. This mechanism ensures that newly introduced nodes are more likely to connect with nodes that possess a higher degree during the network's evolution. Scale-free networks serve as suitable models for various systems, including the Internet, protein interaction networks, citation networks, as well as the majority of social and online networks \cite{barabasi1999emergence}. 

In a BA graph with degree $r$, each new node connects to $r$ pre-existing nodes, with the probability of connection being proportional to the number of edges the existing nodes currently have. In this paper, we consider $r$ values of 1 and 2.


Following the procedures detailed in \cite{ying2020nonconvex}, we assign a positive weight to each edge of a graph and set a zero weight for disconnected nodes. Positive weights are uniformly sampled from $U(2,5)$. This process results in a weighted adjacency matrix $\bm A$ containing all the graph weights. Then we adopt the procedures outlined in \cite{slawski2015estimation} for generating the underlying precision matrix $\bm \Theta$. We first set 
\begin{equation}\label{data_generate}
\tilde{\bm \Theta} = \delta \bm I - \bm A,  \quad \mathrm{with} \quad \delta = 1.05\lambda_{\max} \big(\bm A \big),
\end{equation}
where $\lambda_{\max} \big(\bm A \big)$ represents the largest eigenvalue of $\bm A$. In this context, the weight is zero if two nodes are disconnected, and follows a uniform distribution $U(2,5)$ when nodes are connected. Lastly, we define $\bm \Theta = \bm D \tilde{\bm \Theta} \bm D$, where $\bm D$ is a diagonal matrix selected such that the covariance matrix $\bm \Theta^{-1}$ has unit diagonal elements.

We set the regularization parameter $\lambda_{ij}$ in Problem \eqref{MLE} as follows
\begin{equation}\label{weight-l1}
\lambda_{ij}  = \frac{ \sigma}{ \big| [ \hat{\bm X} ]_{ij} \big| + \epsilon}, \quad \forall \, i \neq j,
\end{equation}
where $\hat{\bm X}$ represents an estimator, $\epsilon$ is set to $10^{-3}$, and $\sigma>0$ is a parameter that adjusts the sparsity. The function in \eqref{weight-l1} is closely related to the re-weighted $\ell_1$-norm regularization \cite{candes2008enhancing,kumar2019unified,ying2021minimax}, which effectively enhances the sparsity of the solution and reduces the estimation bias resulting from the $\ell_1$-norm \cite{ying2021fast}. We employ the maximum likelihood estimator as $\hat{\bm X}$, \textit{i.e.}, the minimizer of Problem \eqref{MLE} without the sparsity regularization. Note that one can solve the maximum likelihood estimator with a relatively large tolerance to obtain a coarse estimator, and alternative monotonically decreasing functions may be explored in \eqref{weight-l1}. For PQN-LBFGS, we utilize the previous 50 updates to compute the search direction. For FPN, we set $\epsilon_k = 10^{-15}$ in \eqref{def_res_set} for identifying the set of \textit{restricted} variables.

For calculating the relative error as defined in \eqref{rel_error}, there is no additional computational cost for PGD, APGD, PQN-LBFGS and FPN, since the objective function value is already evaluated within the backtracking line search during each iteration. In contrast, BCD, optGL, and GGL do not require evaluating the objective function, leading to an extra computational cost when computing the relative error in \eqref{rel_error} for comparison purposes. However, this additional cost can be considered negligible, as these methods only compute the objective function value after completing a cycle. Moreover, the number of cycles needed for BCD, optGL, and GGL is substantially fewer than the number of iterations required by other methods.

 
\subsection{Financial time-series data}

We carry out numerical experiments on a financial time-series dataset to evaluate the performance of our method in recovering graph edges. The applicability of $\mathrm{MTP}2$ models on financial time-series data is well-established, as market factors lead to positive dependencies among stocks \cite{Agrawal20}. 

The dataset consists of 201 stocks composing the S\&P 500 index, spanning the period from January 1, 2017, to January 1, 2020, yielding 753 observations per stock, \textit{i.e.}, $p=201$ and $n=753$. We construct the log-returns data matrix as $X{i,j} = \log P_{i,j} - \log P_{i-1, j}$, where $P_{i,j}$ represents the closing price of the $j$-th stock on the $i$-th day. The stocks are categorized into five sectors based on the Global Industry Classification Standard (GICS) system: Consumer Staples, Utilities, Industrials, Information Technology, and Energy.

Directly assessing the correctness of the learned graph edges is not feasible in financial time-series data, as the underlying graph structure remains unknown. However, we expect stocks from the same sector to have interconnected edges. To measure the performance of edge recovery, we employ the \textit{modularity} metric \cite{newman2006modularity}. Given a graph $\mathcal{G}=\left( V, E \right)$, where $V$ represents the vertex set and $E$ denotes the edge set, the \textit{modularity} is defined as:
  \begin{equation}
    Q :=  \frac{1}{2 |E|} \sum_{i, j \in V}\left( A_{ij} - \frac{d_id_j}{2|E|}\right)\delta(c_i, c_j),
    \label{eq:modularity}
  \end{equation}
where $A_{ij} = 1$ if $(i,j)\in E$, and 0 otherwise. $d_i$ represents the number of edges connected to node $i$, $c_i$ indicates the type of node $i$, and $\delta(\cdot, \cdot)$ refers to the Kronecker delta function, with $\delta(a, b) =1$ if $a=b$ and $0$ otherwise. A stock graph with high modularity exhibits dense connections among stocks within the same sector and sparse connections between stocks in distinct sectors. A higher \textit{modularity} value implies a more faithful representation of the underlying stock network.

\begin{figure}[!htb]
    \captionsetup[subfigure]{justification=centering}
    \centering
      \begin{subfigure}[t]{0.33\textwidth}
        \centering
        \includegraphics[scale=.37]{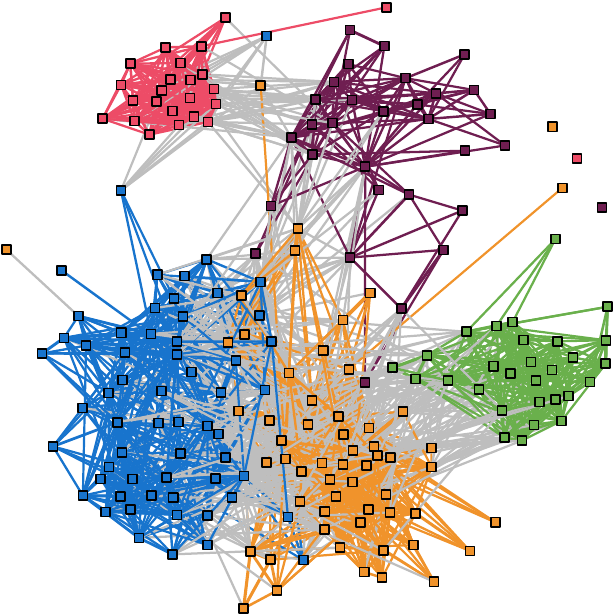}
        \caption{\textsf{Glasso}, $Q=0.47$}
    \end{subfigure}%
        \begin{subfigure}[t]{0.33\textwidth}
        \centering
        \includegraphics[scale=.37]{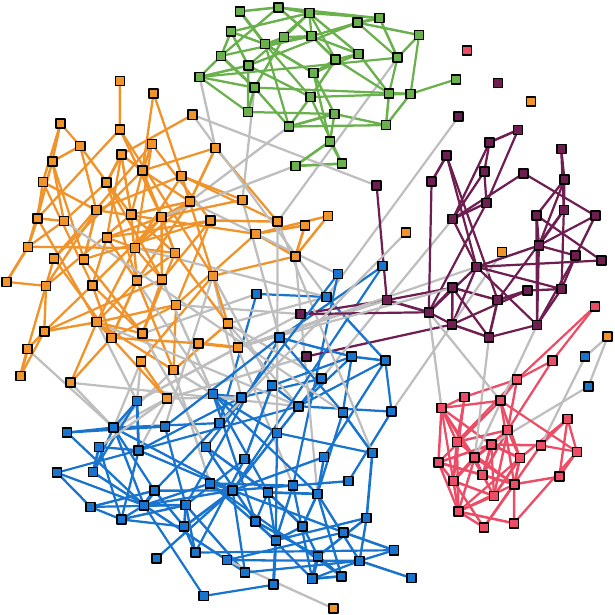}
        \caption{\textsf{FPN}, $Q=0.65$}
    \end{subfigure}%
     \begin{subfigure}[t]{0.33\textwidth}
        \centering
        \includegraphics[scale=.37]{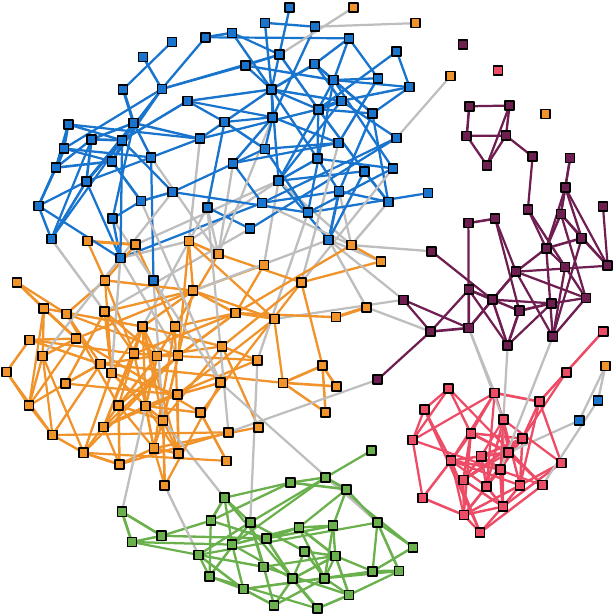}
        \caption{$\textsf{FPN}(\mathcal{E})$, $Q=0.67$}
    \end{subfigure}%
    \caption{Financial time-series graphs learned via (a) Glasso, (b) FPN, and (c) $\text{FPN}(\mathcal{E})$.}\label{fig:stock}
\end{figure}

\Cref{fig:stock} demonstrates that the performances of our proposed FPN and $\text{FPN}(\mathcal{E})$ are better than that of Glasso~\cite{friedman2008sparse}, since the majority of connections in graphs learned through FPN and $\text{FPN}(\mathcal{E})$ occur between nodes within the same sector. In contrast, only a few connections (depicted as gray-colored edges) exist between nodes from different sectors. Both FPN and $\textsf{FPN}(\mathcal{E})$ achieve higher \textit{modularity} values compared to Glasso, indicating that the former have a higher degree of interpretability than the latter. Furthermore, we observe that $\text{FPN}(\mathcal{E})$ moderately enhances the performance of FPN. 

We fine-tune the sparsity regularization parameter for each method based on the \textit{modularity} value, allowing only a limited number of isolated nodes. Note that increasing the regularization parameter for Glasso would result in numerous isolated nodes that cannot be grouped. $\text{FPN}(\mathcal{E})$ refers to the application of \textsf{FPN} for solving Problem \eqref{MLE} with a disconnectivity set $\mathcal{E}$. This set is obtained through hard thresholding on the MLE, which is also used in computing regularization weights in \eqref{weight-l1}.

 \newpage
 
\section{Proofs}\label{proofs}
In this section, we provide the proofs for Propositions~\ref{lem1}, \ref{lem2}, \ref{proposition_linesearch}, and \ref{proposition_decrease}, as well as Theorems~\ref{proposition:unique}, \ref{theorem_conv}, \ref{theorem_support}, and \ref{corollary-convere-rate}.

\subsection{Proof of Proposition~\ref{lem1}}\label{proof-lem1}
\begin{proof}
The Hessian matrix at $\bm X_k$ has the form:
\begin{equation*}
\bm H_k = \bm X_k^{-1} \otimes \bm X_k^{-1}.
\end{equation*}
Then we obtain
\begin{equation*}
\bm H_k^{-1} = \bm X_k \otimes \bm X_k.
\end{equation*}
Following from the property of the Kronecker product that $\vect{\bm A \bm B \bm C} = \left( \bm C^\top \otimes \bm A\right) \mathrm{vec}(\bm B)$, we can obtain
\begin{equation*}
\vect{\bm P_k} = \bm H^{-1}_k \vect{\nabla f(\bm X_k)} = \vect{ \bm X_k \nabla f(\bm X_k) \bm X_k }.
\end{equation*}
As a result, we have
\begin{equation*}
\bm P_k = \bm X_k \nabla f(\bm X_k) \bm X_k,
\end{equation*}
completing the proof.
\end{proof}

\subsection{Proof of Proposition~\ref{lem2}}\label{proof-lem2}
\begin{proof}
Following from \eqref{def_P_new} and \eqref{def_M}, we obtain
\begin{equation*}
\big[\bm P_k \big]_{\mathcal{I}^c_k} = \big[\bm H_k^{-1} \big]_{\mathcal{I}_k^c \mathcal{I}_k^c} \big[\nabla f(\bm X_k) \big]_{\mathcal{I}^c_k} = \big[\bm H_k^{-1} \vect{ \mathcal{P}_{\mathcal{I}_k^c} \left( \nabla f(\bm X_k) \right)} \big]_{\mathcal{I}^c_k},
\end{equation*}
where projection $\mathcal{P}_{\mathcal{I}_k^c}$ is defined in \eqref{proj_Ic}. For ease of presentation, by a slight abuse of notation, both $[\bm A ]_{\mathcal{I}^c_k}  \in \mathbb{R}^{\left| \mathcal{I}^c_k \right|}$ and $[ \vect{\bm A} ]_{\mathcal{I}^c_k}  \in \mathbb{R}^{\left| \mathcal{I}^c_k \right|}$ represent a vector containing all elements of $\bm A$ in the set $\mathcal{I}^c_k$. Following from the fact that $\vect{\bm A \bm B \bm C} = \left( \bm C^\top \otimes \bm A\right) \mathrm{vec}(\bm B)$ and $\bm H_k^{-1} = \bm X_k \otimes \bm X_k$, we have
\begin{equation*}
\bm H_k^{-1} \vect{ \mathcal{P}_{\mathcal{I}_k^c} \left( \nabla f(\bm X_k) \right)} = \vect{\bm X_k \mathcal{P}_{\mathcal{I}_k^c} \left( \nabla f \left( \bm X_k \right) \right) \bm X_k}.
\end{equation*}
By collecting the elements in the set $\mathcal{I}^c_k$, we obtain
\begin{equation*}
\big[\bm P_k \big]_{\mathcal{I}^c_k} = \big[ \bm X_k \mathcal{P}_{\mathcal{I}_k^c} \left( \nabla f \left( \bm X_k \right) \right) \bm X_k \big]_{\mathcal{I}^c_k},
\end{equation*}
completing the proof.
\end{proof}

\subsection{Proof of Propositio~\ref{proposition_linesearch}}\label{proof-proposition_line}
To prove Propositio~\ref{proposition_linesearch}, we first establish Lemma~\ref{lemma:level set} below to show that the lower level set of the objective function is compact. We note that Lemma~\ref{lemma:level set} depends on the condition that the sample covariance matrix has strictly positive diagonal elements, which is assumed throughout the paper and holds with probability one.
\begin{lemma}\label{lemma:level set}
The lower level set $L_f$ defined in \eqref{Level set} is nonempty and compact, and for any $\bm X \in L_f$, we have 
\begin{equation*}
m \bm I \preceq \bm X \preceq M\bm I,
\end{equation*}
where $m$ and $M$ are two positive scalars.
\end{lemma}
\begin{proof}
We first show that the largest eigenvalue of any $\bm X \in L_f$ can be upper bounded by $M$. The objective function of \eqref{MLE} can be written as
\begin{equation*}
f(\bm X) = - \log \det (\bm X) + \tr{\bm X \bm G},
\end{equation*}
where $\bm G = \bm S - \bm \lambda$ with $\bm \lambda$ defined by
\begin{equation}\label{eq:Lambda}
[\bm \lambda ]_{ij} = \left\lbrace 
\begin{array}{ll}
\lambda_{ij} & \mathrm{if} \, i \neq j, \\
0 & \mathrm{otherwise}.
\end{array} \right.
\end{equation}
Define two constants 
\begin{equation*}
\mu := \min_i G_{ii}, \quad \mathrm{and} \quad \upsilon := \max_{i \neq j, \, (i, j) \notin \mathcal{E}} | G_{ij} |. 
\end{equation*}
Note that $\mu > 0$ because $S_{ii} >0$ for each $i \in [p]$. Then one has
\begin{equation}\label{eq:trace_X}
\tr{\bm X \bm G} = \sum_{i} G_{ii} X_{ii} + \sum_{i \neq j} G_{ij} X_{ij} \geq \mu \tr{\bm X} + \upsilon \sum_{i \neq j} X_{ij},
\end{equation}
where the inequality follows from the fact that $X_{ij} \leq 0$ for $i \neq j$, and $X_{ij} = 0$ for $(i, j) \in \mathcal{E}$. We denote the largest eigenvalue of $\bm X$ by $\lambda_{\max}(\bm X)$. Then we have
\begin{equation}\label{eq:eigen-max}
\lambda_{\max} (\bm X) \leq \tr{\bm X} \leq \frac{1}{\mu} \Big( \tr{\bm X \bm G} +  \upsilon \sum_{i \neq j} | X_{ij} | \Big),
\end{equation}
where the second inequality follows from \eqref{eq:trace_X}. In what follows, we bound the terms $\tr{\bm X \bm G}$ and $\sum_{i \neq j} X_{ij}$. For any $\bm X \in L_f$, one has $f(\bm X^o) \geq - \log \det (\bm X) + \tr{\bm X \bm G}$.
Therefore, the term $\tr{\bm X \bm G}$ can be bounded by
\begin{equation}\label{eq:trace-XG}
\tr{\bm X \bm G} \leq f(\bm X^o) + \log \det (\bm X) \leq f(\bm X^o) + p \log \left(\lambda_{\max} (\bm X) \right).
\end{equation}
Let $\lambda':=\min_{i \neq j, \, (i, j) \notin \mathcal{E}} \lambda_{ij}$. Then one has
\begin{equation}\label{eq:Xij}
\sum_{i \neq j} | X_{ij} | \leq \frac{1}{\lambda'} \sum_{i \neq j} \lambda_{ij} | X_{ij} | \leq \frac{1}{\lambda'} \big( f(\bm X^o) + p \log \left(\lambda_{\max} (\bm X) \right) \big),
\end{equation}
where the last inequality follows from $\sum_{i \neq j} \lambda_{ij} | X_{ij} | \leq \tr{\bm X \bm G}$, because $\tr{\bm X \bm S} \geq 0$ since $\bm X \in \mathbb{S}_{++}^p$ and $\bm S \in \mathbb{S}_+^p$. Together with \eqref{eq:eigen-max}, \eqref{eq:trace-XG} and \eqref{eq:Xij}, we obtain
\begin{equation*}
\lambda_{\max} (\bm X) \leq \frac{1}{\mu} \Big( 1+ \frac{\upsilon}{\lambda'} \Big)\big( f(\bm X^o) + p \log \left(\lambda_{\max} (\bm X) \right) \big).
\end{equation*}
Since $\log \left(\lambda_{\max} (\bm X) \right)$ grows much slower than $\lambda_{\max} (\bm X)$, $\lambda_{\max} (\bm X)$ can be upper bounded by a constant $M$, which depends on $f(\bm X^o)$, $\bm S$, and $\bm \lambda$.

We denote the smallest eigenvalue of $\bm X$ by $\lambda_{\min}(\bm X)$. For any $\bm X \in L_f$, one has
\begin{equation*}
f(\bm X^o) \geq f(\bm X) \geq - \log \det (\bm X) \geq - \log \lambda_{\min}(\bm X) - (p-1) \log M.
\end{equation*}
As a result, we have $ \lambda_{\min}(\bm X) \geq e^{-f(\bm X^o)} M^{-(p-1)}$, which shows that $\lambda_{\min}(\bm X)$ can be lower bounded by a positive constant $m = e^{-f(\bm X^o)} M^{-(p-1)}$.
Finally, we show that the lower level set $L_f$ is compact. First, $L_f$ is closed because $f$ is a continuous function. Second, $L_f$ is also bounded because it is a subset of $\left\lbrace \bm X \in \mathbb{R}^{p \times p} | m \bm I \preceq \bm X \preceq M\bm I \right\rbrace$.
\end{proof}


\begin{proof}[Proof of Propositio~\ref{proposition_linesearch}]
We first show that $\bm X_k (\gamma_k) \in \mathbb{S}_{++}$ holds for a small enough step size. The $\bm X_k (\gamma_k)$ can be equivalently written in the form $\bm X_k (\gamma_k) = \bm Z_k - \gamma_k \bm G_k$, where $\bm Z_k \in \mathbb{R}^{p \times p}$ defined by $[\bm Z_k]_{ij} = [\bm X_k]_{ij}$ if $(i,j) \in \mathcal{I}_k^c$, and 0 otherwise. $\bm G_k$ is some search direction only over $\mathcal{I}_k^c$, which is a symmetric matrix. It is known that a matrix $\bm A$ is a nonsingular \textit{M-matrix} if and only if $\bm A$ is a \textit{Z-matrix} and there exists a vector $\bm x >\bm 0$ with $\bm A \bm x > \bm 0$ \cite{plemmons1977m}. Following from the fact that $\bm X_k$ is a nonsingular \textit{M-matrix}, there exists $\bm x > \bm 0$ such that
\begin{equation*}
\bm Z_k \bm x \geq \bm X_k \bm x > 0.
\end{equation*}
Therefore, $\bm Z_k$ is also a nonsingular \textit{M-matrix}, implying that $\bm Z_k \in \mathbb{S}_{++}$. As a result, if $\gamma_k < \lambda_{\min} (\bm Z_k)/\rho(\bm G_k)$, where $\rho(\bm G_k)$ is the spectral radius of $\bm G_k$, then $\bm X_k (\gamma_k) \in \mathbb{S}_{++}$. The definition of $\bm X_k (\gamma_k)$ further guarantees that $\bm X_k (\gamma_k) \in \mathcal{U}^p$. We can verify that $f(\bm X) = + \infty$ for any $\bm X \in \mathrm{cl}(\mathcal{U}^p) \setminus \mathcal{U}^p$, where $\bm X$ is positive semidefinite and singular. Thus, we consider an $\bm X^o \in \mathcal{U}^p$ with $f(\bm X^o)$ is sufficiently large such that $f(\bm X_k (\gamma_k)) \leq f(\bm X^o)$. Therefore, $\bm X_k (\gamma_k) \in L_f$.

Next we prove that the line search condition \eqref{eq:line-search} holds for a small enough step size. Recall that $\mathcal{I}_k^c$ is the complement of the set $\mathcal{I}_k$ defined in \eqref{def_res_set}. For any $\bm X_k \in L_f$, the set $\mathcal{I}_k^c$ can be represented as
\begin{equation*}
\mathcal{I}_k^c = \mathcal{T}^c(\bm X_k, \epsilon_k) \cap \mathcal{E}^c = \bigcup_{l=1}^5 \mathcal{B}_k^{(l)},
\end{equation*}
where $\mathcal{B}_k^{(l)}$, $l = 1, \ldots, 5$, are defined as
\begin{equation}
\begin{split}
\mathcal{B}_k^{(1)} &= \big\lbrace (i, j) \in \mathcal{E}^c \, | \, -\epsilon_k \leq \big[\bm X_k \big]_{ij} \leq 0, \ \big[\nabla f (\bm X_k) \big]_{ij} \geq 0, \ \big[\bm P_k \big]_{ij} < 0  \big\rbrace, \nonumber
\\
\mathcal{B}_k^{(2)} &= \big\lbrace (i, j) \in \mathcal{E}^c \, | \, -\epsilon_k \leq \big[\bm X_k \big]_{ij} \leq 0, \ \big[\nabla f (\bm X_k) \big]_{ij} \geq 0, \ \big[\bm P_k \big]_{ij} \geq 0  \big\rbrace, \nonumber
\\
\mathcal{B}_k^{(3)} &= \big\lbrace (i, j) \in \mathcal{E}^c \, | \, \big[\bm X_k \big]_{ij} < -\epsilon_k, \ \big[\bm P_k \big]_{ij} <0  \big\rbrace, \nonumber
\\
\mathcal{B}_k^{(4)} &= \big\lbrace (i, j) \in \mathcal{E}^c \, | \, \big[\bm X_k \big]_{ij} < -\epsilon_k, \ \big[\bm P_k \big]_{ij} \geq 0  \big\rbrace, \nonumber
\\
\mathcal{B}_k^{(5)} &= \big\lbrace (i, j) \in \mathcal{E}^c \, | \, \big[\bm X_k \big]_{ij} > 0 \big\rbrace. \nonumber
\end{split}
\end{equation}
Each subset $\mathcal{B}_k^{(l)}$ is disjoint with each other. $\mathcal{B}_k^{(5)}$ only contains the indexes of the diagonal elements of $\bm X_k$, whiles the other subsets include the indexes of the off-diagonal elements. Then one has, for any $(i,j) \in \mathcal{B}_k^{(1)}$ and $\gamma_k >0$,
\begin{equation}\label{inq_I1}
0 \leq \big[ \bm X_{k} (\gamma_k) \big]_{ij} - \big[ \bm X_k \big]_{ij} = \big[\mathcal{P}_{\Omega} \big( \bm X_k - \gamma_k \bm P_k \big)\big]_{ij} - \big[\bm X_k\big]_{ij} \leq - \gamma_k \left[\bm P_k \right]_{ij}. 
\end{equation}
For any $(i,j) \in \mathcal{B}_k^{(2)}$ and $\gamma_k >0$, one has $\big[ \bm X_{k} (\gamma_k) \big]_{ij} - \big[ \bm X_k \big]_{ij} = - \gamma_k \left[\bm P_k \right]_{ij}$. For any $(i, j) \in \mathcal{B}_k^{(3)}$, if the step size satisfies
\begin{equation}\label{con_gamma}
0 < \gamma_k \leq \min_{(i, j) \in \mathcal{B}_k^{(3)}} \frac{\epsilon_k}{ \big|\left[\bm P_k \right]_{ij} \big|},
\end{equation}
then one has $\big[ \bm X_{k} (\gamma_k) \big]_{ij} - \big[ \bm X_k \big]_{ij} = - \gamma_k \big[\bm P_k \big]_{ij}$.
Similarly, for any $(i, j) \in \mathcal{B}_k^{(4)}$ and $\gamma_k >0$,
$
\big[ \bm X_{k} (\gamma_k) \big]_{ij} - \big[ \bm X_k \big]_{ij} = - \gamma_k \big[\bm P_k \big]_{ij}
$.
Finally, for any $(i, j) \in \mathcal{B}_k^{(5)}$, $\big[ \bm X_{k} \big]_{ij}$ must be on the diagonal of $\bm X_{k}$. Therefore, we can directly remove the projection $\mathcal{P}_{\Omega}$, and obtain
$
\big[ \bm X_{k} (\gamma_k) \big]_{ij} - \big [ \bm X_k \big]_{ij} = - \gamma_k \big[\bm P_k \big]_{ij}
$.
Therefore, if \eqref{con_gamma} holds, one has
\begin{equation}\label{inq_trace}
\big\langle \big[\nabla f(\bm X_k) \big]_{\mathcal{I}_k^c}, \, \big[ \bm X_{k} (\gamma_k) \big]_{\mathcal{I}_k^c} - \big[ \bm X_{k} \big]_{\mathcal{I}_k^c} \big\rangle 
\leq - \gamma_k \big\langle \big[\nabla f(\bm X_k) \big]_{\mathcal{I}^c_k} , \, \big [\bm P_k \big]_{\mathcal{I}^c_k} \big\rangle, 
\end{equation}
where the inequality follows from \eqref{inq_I1}. Recall that $\big[ \bm X_{k} (\gamma_k) \big]_{\mathcal{I}_k} = \bm 0$, which can be equivalently expressed as the form of projected gradient descent:
\begin{equation}\label{X_I}
\big[ \bm X_{k} (\gamma_k) \big]_{\mathcal{I}_k} = \mathcal{P}_{\Omega} \Big( \big[\bm X_k\big]_{ \mathcal{I}_k} - \gamma_k  \big[ \tilde{\bm D}_k \odot \nabla f(\bm X_k) \big]_{ \mathcal{I}_k} \Big),
\end{equation}
where we write $\mathcal{P}_{\Omega} \big([\bm A]_{\mathcal{I}_k} \big)$ for $\big[\mathcal{P}_{\Omega} (\bm A) \big]_{\mathcal{I}_k}$ with a slight abuse of notation, and $\tilde{\bm D}_k$ is defined as
\begin{equation*}
\big[ \tilde{\bm D}_k \big]_{ij} = \frac{\epsilon_k}{\gamma_k \big|[\nabla f(\bm X_k)]_{ij} \big|}, \quad \forall \, (i,j) \in \mathcal{I}_k \setminus \mathcal{E}.
\end{equation*}
Then one has
\begin{equation}\label{bound_norm}
\begin{split}
& \big \| \bm X_{k} (\gamma_k) - \bm X_k \big\|_{\mathrm{F}}^2 \\
\leq & \gamma_k^2 \Big\langle \big [\bm P_k \big]_{\mathcal{I}^c_k}, \,  \big [\bm P_k \big]_{\mathcal{I}^c_k} \Big\rangle - \gamma_k \Big \langle  \Big[\tilde{\bm D}_k \odot \nabla f(\bm X_k) \Big]_{ \mathcal{I}_k}, \, \big[ \bm X_{k} (\gamma_k) \big]_{\mathcal{I}_k} - \big[ \bm X_{k} \big]_{\mathcal{I}_k } \Big\rangle \\
\leq & \gamma_k^2 \Big\langle \big [\bm P_k \big]_{\mathcal{I}^c_k}, \,  \big [\bm P_k \big]_{\mathcal{I}^c_k} \Big\rangle  + a_k \Big \langle \big[\nabla f(\bm X_k) \big]_{ \mathcal{I}_k}, \,  \big[ \bm X_{k} \big]_{\mathcal{I}_k } \Big\rangle,
\end{split}
\end{equation}
where $a_k = \dfrac{\epsilon_k}{ \big\| \big[\nabla f(\bm X_k)\big]_{\mathcal{T}_k \setminus \mathcal{E}}\big\|_{\min}}$, and $\mathcal{T}_k$ represents $\mathcal{T}(\bm X_k, \epsilon_k)$. Note that $\mathcal{I}_k = \mathcal{T}_k \cup \mathcal{E}$, and $[\bm X_k]_{ij} = [\bm X_{k} (\gamma_k)]_{ij} =0$ for any $(i,j) \in \mathcal{E}$. 

It is worth mentioning that we primarily focus on cases with non-empty $\mathcal{T}_k$. In the rare event that $\mathcal{T}_k$ is empty, such as when $\mathcal{T}_{\delta}$ is empty (in this situation we define $\epsilon_k = \delta$, leading to an empty $\mathcal{T}_k$), we can simply ignore the terms related to $\mathcal{I}_k$ in \eqref{bound_norm} and the following proof.

Furthermore, one has
\begin{equation}\label{inq_inner}
\begin{split}
\big\langle \big [\bm P_k \big]_{\mathcal{I}^c_k}, \,  \big [\bm P_k \big]_{\mathcal{I}^c_k} \big\rangle 
& \leq \big\| \big[ \bm M^{-1}_k \big]_{\mathcal{I}^c_k \mathcal{I}^c_k}^{\frac{1}{2}} \big\|_2^2 \big\| \big[ \bm M^{-1}_k \big]_{\mathcal{I}^c_k \mathcal{I}^c_k}^{\frac{1}{2}} \big[\nabla f (\bm X_k)\big]_{\mathcal{I}^c_k} \big\|^2 \\
& = \lambda_{\max} \big( \big[ \bm M^{-1}_k \big]_{\mathcal{I}^c_k \mathcal{I}^c_k} \big) \big\langle \big[\nabla f (\bm X_k)\big]_{\mathcal{I}^c_k}, \,  \big [\bm P_k \big]_{\mathcal{I}^c_k} \big\rangle.
\end{split}
\end{equation}
The largest eigenvalue of $\big[ \bm M^{-1}_k \big]_{\mathcal{I}^c_k \mathcal{I}^c_k}$ can be bounded by
\begin{equation}\label{bound_eig}
\lambda_{\max} \left( \big[ \bm M^{-1}_k \big]_{\mathcal{I}^c_k \mathcal{I}^c_k} \right) \leq \lambda_{\max} \left( \bm H^{-1}_k \right) = \lambda_{\max} \left( \bm X_k \otimes \bm X_k \right) \leq M^2,
\end{equation}
where the first inequality follows from the Eigenvalue Interlacing Theorem, and the second inequality follows from Lemma~\ref{lemma:level set}. Combining \eqref{bound_norm}, \eqref{inq_inner} and \eqref{bound_eig} yields
\begin{equation*}
\big\| \bm X_{k} (\gamma_k) - \bm X_k \big\|_{\mathrm{F}}^2 \leq  \gamma_k^2 M^2 \big\langle \big [\nabla f(\bm X_k) \big]_{\mathcal{I}^c_k},  \big [\bm P_k \big]_{\mathcal{I}^c_k} \big\rangle   + a_k \big \langle \big[\nabla f(\bm X_k) \big]_{ \mathcal{I}_k}, \, \big[ \bm X_{k} \big]_{\mathcal{I}_k } \big\rangle.
\end{equation*}
Since $\bm X_{k} (\gamma_k) \in L_f$, we obtain
\begin{equation}
f \left(\bm X_{k} (\gamma_k) \right) - f \left( \bm X_k \right) \leq  \big\langle \nabla f(\bm X_k), \, \bm X_{k} (\gamma_k) - \bm X_k \big\rangle + \frac{1}{2 m^2} \big\| \bm X_{k} (\gamma_k) - \bm X_k \big\|_{\mathrm{F}}^2,
\end{equation}
where the inequality follows from
\begin{equation*}
\lambda_{\max} \left( \nabla^2 f(\bm X) \right) = \lambda_{\max} \left( \bm X^{-1} \otimes \bm X^{-1} \right) = \lambda_{\max}^2 (\bm X^{-1}) \leq \frac{1}{m^2}, \quad \forall \bm X \in L_f,
\end{equation*} 
where the inequality follows from Lemma~\ref{lemma:level set}. 

Let $\bar{\gamma}_k = \min \Big( \dfrac{2(1 - \alpha)m^2}{M^2}, \, \underset{(i, j) \in \mathcal{B}_k^{(3)}}{\min} \dfrac{\epsilon_k}{ \big|\left[\bm P_k \right]_{ij} \big|}, \, \dfrac{\lambda_{\min} (\bm Z_k)}{\rho(\bm G_k)} \Big)$. Note that $\bar{\gamma}_k$ is bounded away from zero, because each $\left[\bm P_k \right]_{ij}$ is bounded following from Lemma~\ref{lemma:level set}. To this end, for any $\gamma_k \in (0, \bar{\gamma}_k)$,
\begin{equation}\label{stopping_check}
f \left(\bm X_{k} (\gamma_k)\right) - f \left( \bm X_k \right) 
\leq \! - \alpha \gamma_k \big\langle \big [\nabla f(\bm X_k) \big]_{\mathcal{I}^c_k} , \big [\bm P_k \big]_{\mathcal{I}^c_k} \big\rangle - \alpha \big\langle \big[\nabla f(\bm X_k) \big]_{\mathcal{I}_k}, \big[\bm X_k \big]_{\mathcal{I}_k}  \big\rangle,
\end{equation} 
where the inequality follows from \eqref{inq_trace}, the definition of $\epsilon_k$ in \eqref{epsl-k}, the inequality $\big\| \big[\nabla f(\bm X_k) \big]_{\mathcal{T}_\delta \setminus \mathcal{E}} \big\|_{\min} \leq \big\| \big[\nabla f(\bm X_k) \big]_{\mathcal{T}_k \setminus \mathcal{E}} \big\|_{\min}$, because $\mathcal{T}_k \subseteq \mathcal{T}_\delta$, and the inequality $\big\langle \big [\nabla f(\bm X_k) \big]_{\mathcal{I}^c_k} , \, \big [\bm P_k \big]_{\mathcal{I}^c_k} \big\rangle \geq 0$, because $\big[ \bm M^{-1}_k \big]_{\mathcal{I}^c_k \mathcal{I}^c_k}$ is positive definite and 
\begin{equation}\label{non-nega-gh}
\big\langle \big [\nabla f(\bm X_k) \big]_{\mathcal{I}^c_k} , \, \big [\bm P_k \big]_{\mathcal{I}^c_k} \big\rangle = \big\langle \big [\nabla f(\bm X_k) \big]_{\mathcal{I}^c_k} , \, \big[ \bm M^{-1}_k \big]_{\mathcal{I}^c_k \mathcal{I}^c_k} \big[\nabla f (\bm X_k)\big]_{\mathcal{I}^c_k}\big\rangle,
\end{equation}  
completing the proof.
\end{proof}

\subsection{Proof of Proposition~\ref{proposition_decrease}}\label{proof-proposition_decrease}
\begin{proof}
The line search condition \eqref{eq:line-search} leads to
\begin{equation}\label{func-descrease}
f \left(\bm X_{k} (\gamma_k) \right) - f \left( \bm X_k \right)
\leq  - \alpha \gamma_k \big\langle \big[ \nabla f(\bm X_k) \big]_{ \mathcal{I}^c_k} , \, \big[ \bm M^{-1}_k \big]_{\mathcal{I}^c_k \mathcal{I}^c_k} \big[ \nabla f(\bm X_k) \big]_{\mathcal{I}^c_k} \big\rangle,
\end{equation}
where the inequality follows from 
\begin{equation*}
\big\langle \big[\nabla f(\bm X_k) \big]_{\mathcal{I}_k}, \big[\bm X_k \big]_{\mathcal{I}_k}  \big\rangle \geq 0.
\end{equation*} 
We further have
\begin{equation}\label{bound_eif_M}
\lambda_{\min} \Big( \big[ \bm M^{-1}_k \big]_{\mathcal{I}^c_k \mathcal{I}^c_k} \Big) \geq \lambda_{\min} (\bm H_k^{-1}) = \lambda_{\min} (\bm X_k \otimes \bm X_k) \geq m^2,
\end{equation}
where the first and second inequalities follow from Eigenvalue Interlacing Theorem and Lemma~\ref{lemma:level set}, respectively. We complete the proof by combining \eqref{func-descrease} and \eqref{bound_eif_M}.
\end{proof}

\subsection{Proof of Theorem~\ref{proposition:unique}}\label{proof-unique}
\begin{proof}
We first prove that Problem \eqref{MLE} has at least one minimizer. 
It is known by the Weierstrass' extreme value theorem \cite{drabek2007methods} that the set of minima is nonempty for any lower semicontinuous function with a nonempty compact lower level set. Therefore, the existence of the minimizers for Problem \eqref{MLE} can be guaranteed by Lemma~\ref{lemma:level set}.

On the other hand, we show that Problem \eqref{MLE} has at most one minimizer by its strict convexity. We have $\nabla^2 f(\bm X) = \bm X^{-1} \otimes \bm X^{-1}$. Thus $\nabla^2 f(\bm X) \succ \bm 0$ for any $\bm X$ in the feasible region of Problem \eqref{MLE} defined in \eqref{eq:feasible-set}. Therefore, Problem \eqref{MLE} is strictly convex, and thus has at most one minimizer. Together with the existence of the minimizers, we conclude that Problem \eqref{MLE} has an unique minimizer.

The Lagrangian of Problem~\eqref{MLE} is
\begin{equation}\label{eq:lagrangian-1}
\mathcal{L} \left(\bm X, \bm Y \right) = - \log \det (\bm X) + \tr{\bm X \bm S}  + \langle \bm Y - \bm \lambda,  \bm X \rangle,
\end{equation}
where $\bm Y$ is a KKT multiplier with $Y_{ii} =0$ for $i \in [p]$, and $\bm \lambda$ is defined in \eqref{eq:Lambda}.

For a convex optimization problem with Slater's condition holding, a pair is primal and dual optimal if and only if the KKT conditions hold. Thus, $\left(\bm X^\star, \bm Y^\star \right)$ is primal and dual optimal if and only if it satisfies the KKT conditions of \eqref{MLE} as below,
\begin{align}
- (\bm X^\star)^{-1} + \bm S - \bm \lambda + \bm Y^\star = \bm 0&; \label{TT1-kkt1}\\
X^\star_{ij} =0, \ \forall \ (i, j) \in \mathcal{E}  &; \label{TT1-kkt2}\\
X^\star_{ij} Y^\star_{ij} =0, \ X^\star_{ij} \leq 0, \ Y^\star_{ij} \geq 0, \  \forall \ i \neq j \ \mathrm{and} \ (i, j) \notin \mathcal{E}&; \label{TT1-kkt3}\\
Y^\star_{ii}=0, \ \forall \ i \in [p] &. \label{TT1-kkt4}
\end{align}

We first prove that the minimizer $\bm X^\star$ must satisfy all conditions in \eqref{opt}. Note that the KKT conditions \eqref{TT1-kkt1}-\eqref{TT1-kkt4} must hold for the minimizer $\bm X^\star$. Let $\mathcal{V}= \left\lbrace (i,j) \in [p]^2 \, | \, X^\star_{ij} =0 \right\rbrace$. First, $X^\star_{ij} = 0$ for any $(i,j) \in \mathcal{E}$ following from \eqref{TT1-kkt2}. Second, for any $(i, j)$ with $i \neq j$ and $(i, j) \in \mathcal{V}^c$, we have $X^\star_{ij} \neq 0$ and $(i, j) \notin \mathcal{E}$. Following from \eqref{TT1-kkt3}, we further obtain $Y^\star_{ij} = 0$. Together with \eqref{TT1-kkt4}, we conclude that $Y^\star_{ij} = 0$ for any $(i, j) \in \mathcal{V}^c$. Following from \eqref{TT1-kkt1}, $\big[ \nabla f(\bm X^\star) \big]_{\mathcal{V}^c} = \bm 0$. Since $\bm X^\star$ is positive definite, $(i, i) \in \mathcal{V}^c$ for any $i \in [p]$. Then, for any $(i, j) \in \mathcal{V} \backslash \mathcal{E}$, we have $i \neq j$, thus obtain $Y^\star_{ij} \geq 0$ according to \eqref{TT1-kkt3}. Following from \eqref{TT1-kkt1}, we get $\big[ \nabla f(\bm X^\star) \big]_{\mathcal{V} \backslash \mathcal{E} } \leq \bm 0$.

Now we prove that any point $\bm X^\star \in \mathcal{M}^p$ satisfying the conditions in \eqref{opt} must be the minimizer, \textit{i.e.}, the KKT conditions \eqref{TT1-kkt1}-\eqref{TT1-kkt4} hold for $\bm X^\star$. We construct $\bm Y^\star$ by $\bm Y^\star = - \nabla f (\bm X^\star)$. First, it is straightforward to check that the conditions \eqref{TT1-kkt1} and \eqref{TT1-kkt2} hold. Second, we have 
\begin{equation}\label{Y_Vc}
[\bm Y^\star]_{\mathcal{V}^c} = -[ \nabla f (\bm X^\star)]_{\mathcal{V}^c} = \bm 0.
\end{equation}
We know that $(i, i) \in \mathcal{V}^c$ for any $i \in [p]$, since $\bm X^\star \in \mathcal{M}^p$. Together with \eqref{Y_Vc}, we obtain that the condition \eqref{TT1-kkt4} holds. 

Finally, following from the fact that $(i, i) \in \mathcal{V}^c$ for any $i \in [p]$ and $\mathcal{E} \subseteq \mathcal{V}$, we obtain 
\begin{equation*}
\big\{(i,j)\in [p]^2 \, | \, i \neq j, \, (i,j) \notin \mathcal{E} \big\} = \mathcal{I}_1 \cup \mathcal{I}_2,
\end{equation*}
where $\mathcal{I}_1 = \big\{(i,j)\in [p]^2 \, | \, (i,j) \in \mathcal{V}^c, i \neq j \big\}$, and $\mathcal{I}_2 = \big\{(i,j)\in [p]^2 \, | \, (i,j) \in \mathcal{V}, (i,j) \notin \mathcal{E} \big\}$. For any $(i,j) \in \mathcal{I}_1$, we have $Y^\star_{ij} = 0$ according to \eqref{Y_Vc}, and $X^\star_{ij} < 0$ since $\bm X^\star \in \mathcal{M}^p$. Thus the condition \eqref{TT1-kkt3} holds for any $(i,j) \in \mathcal{I}_1$. For any $(i,j) \in \mathcal{I}_2$, we have $X^\star_{ij} = 0$, and $Y^\star_{ij} = - [\nabla f (\bm X^\star)]_{ij} \geq 0$ according to the second condition in \eqref{opt}. Thus the condition \eqref{TT1-kkt3} also holds for any $(i,j) \in \mathcal{I}_2$. Totally, the condition \eqref{TT1-kkt3} holds. To sum up, all KKT conditions \eqref{TT1-kkt1}-\eqref{TT1-kkt4} hold for any $\bm X^\star \in \mathcal{M}^p$ satisfying the conditions in \eqref{opt}, and thus we conclude that $\bm X^\star$ is the minimizer of Problem \eqref{MLE}.
\end{proof}

\subsection{Proof of Theorem~\ref{theorem_conv}}\label{proof-conv}
\begin{proof}
Following from Proposition~\ref{proposition_linesearch}, for any iterate $\bm X_k \in L_f$, a small enough step size ensures that the line search condition~\eqref{eq:line-search} holds, which leads to a sufficient decrease of the objective function as shown in \cref{proposition_decrease}, and the next iterate $\bm X_{k+1} \in L_f$. When the sequence starts with $\bm X_0 \in L_f$, each point of the sequence $\left\lbrace \bm X_k \right\rbrace_{k \geq 0}$ admits $\bm X_k \in L_f$. Note that it is easy to construct an initial point $\bm X_0 \in L_f$, because we could consider an $\bm X^o \in \mathcal{U}^p$ in \eqref{Level set}, which is close to $\mathrm{cl}(\mathcal{U}^p) \setminus \mathcal{U}^p$ such that $f(\bm X^o)$ sufficiently large, following from the fact that $f(\bm X) = + \infty$ for any $\bm X \in \mathrm{cl}(\mathcal{U}^p) \setminus \mathcal{U}^p$. Lemma~\ref{lemma:level set} shows that the lower level set $L_f$ is compact, thus the sequence $\left\lbrace \bm X_k \right\rbrace$ has at least one limit point. For every limit $\bm X^\star$ of the sequence $\left\lbrace \bm X_k \right\rbrace$, we have $\bm X^\star \in \mathcal{M}^p$. Define 
$
\mathcal{I}^\star = \mathcal{T}(\bm X^\star, \epsilon^\star) \cup \mathcal{E}
$,
where $\mathcal{T}(\bm X^\star, \epsilon^\star)$ is equal to 
\begin{equation*}
\mathcal{T}(\bm X^\star, \epsilon^\star) = \big\lbrace (i, j) \in [p]^2 \, \big | \, - \epsilon^\star \leq \big[\bm X^\star \big]_{ij} \leq 0, \ \big[ \nabla f \left( \bm X^\star \right) \big]_{ij} < 0  \big\rbrace,
\end{equation*} 
and $\epsilon^\star$ is defined as
$
\epsilon^\star := \min \big( 2(1-\alpha) m^2 \big\| \big[\nabla f(\bm X^\star) \big]_{\mathcal{T}^\star_{\delta} \setminus \mathcal{E}} \big\|_{\min}, \, \delta \big)
$,
where $\mathcal{T}^\star_{\delta}$ denotes $\mathcal{T}(\bm X^\star, \delta)$. Since $f$ is continuous and $\left\lbrace f(\bm X_k)\right\rbrace$ keeps decreasing and is bounded, it follows that $\left\lbrace f(\bm X_k)\right\rbrace$ converges and
\begin{equation*}
\lim_{k \to + \infty} f(\bm X_k) - f(\bm X_{k+1}) = 0.
\end{equation*}
Following from Proposition~\ref{proposition_linesearch}, the line search condition \eqref{eq:line-search} ensures that each $\bm X_k \in \mathbb{S}_{++}^p$ and, for a strictly positive $\gamma_k$, it holds
\begin{align}
 f ( \bm X_k ) - f ( \bm X_{k+1} ) & \geq   \alpha \gamma_k \big\langle {[\nabla f(\bm X_k) ]}_{\mathcal{I}^c_k}, \,  {[\bm P_k ]}_{\mathcal{I}^c_k} \big\rangle + \alpha  \big\langle {[\nabla f(\bm X_k) ]}_{\mathcal{I}_k}, \, {[ \bm X_k ]}_{\mathcal{I}_k}  \big\rangle \label{eq:line-nonnegativity}\\
 & =  \alpha \gamma_k\big\langle \big [\nabla f(\bm X_k) \big]_{\mathcal{I}^c_k}, \, \big[ \bm M^{-1}_k \big]_{\mathcal{I}^c_k \mathcal{I}^c_k} \big[\nabla f (\bm X_k)\big]_{\mathcal{I}^c_k}\big\rangle + \alpha  \big\langle {[\nabla f(\bm X_k) ]}_{\mathcal{I}_k}, \, {[ \bm X_k ]}_{\mathcal{I}_k}  \big\rangle.  \nonumber
\end{align}
Recall that $\mathcal{I}_k = \mathcal{T}_k \cup \mathcal{E}$. We note that the following facts hold: $\alpha \in (0,1)$ is a constant, $\big[ \bm M^{-1}_k \big]_{\mathcal{I}^c_k \mathcal{I}^c_k}$ is positive definite, $[\bm X_k]_{ij} = 0$ for any $(i,j) \in \mathcal{E}$, $[\bm X_k]_{ij} \leq 0$ for any $(i,j) \in \mathcal{I}_k \backslash \mathcal{E}$, and ${[\nabla f(\bm X_k) ]}_{ij} < 0 $ for any $(i,j) \in \mathcal{I}_k \backslash \mathcal{E}$ due to the definition of $\mathcal{T}(\bm X, \epsilon)$. Thus, the two terms on the right-hand side of \eqref{eq:line-nonnegativity} are both nonnegative. Moreover, the right-hand side of \eqref{eq:line-nonnegativity} approaches zero if and only if both ${\big [\nabla f(\bm X_k) \big]}_{\mathcal{I}^c_k}$ and ${[ \bm X_k ]}_{\mathcal{I}_k}$ simultaneously go to zero. Therefore, we deduce that every limit point $\bm X^\star$ must satisfy:
\begin{equation}\label{limit-con}
\big[\nabla f \left( \bm X^\star \right) \big]_{ \left\lbrace \mathcal{I}^\star \right\rbrace^c} = \bm 0, \quad \mathrm{and} \quad \big[ \bm X^\star \big]_{\mathcal{I}^\star} = \bm 0.
\end{equation}

We show that every limit point $\bm X^\star$ is the minimizer of Problem \eqref{MLE} according to Theorem~\ref{proposition:unique}. Let $\mathcal{V} = \big\lbrace (i, j) \in [p]^2 \, | \, \big[\bm X^\star\big]_{ij} = 0 \big\rbrace$. First, for any $(i,j) \in \mathcal{E}$, we have 
$
\big[\bm X^\star\big]_{ij} = 0
$,
because of the projection $\mathcal{P}_{\Omega}$ in each iteration. Note that $\big[\bm X^\star\big]_{ij}=0$ for any $(i,j) \in \mathcal{I}^\star$ according to \eqref{limit-con}. For any $(i, j) \in \mathcal{V}^c$, \textit{i.e.}, $\big[ \bm X^\star\big]_{ij} \neq 0$, we must have $(i, j) \in \left\lbrace\mathcal{I}^\star \right\rbrace^c$. Together with \eqref{limit-con}, 
$
\big[\nabla f \left(\bm X^\star \right) \big]_{\mathcal{V}^c} = \bm 0
$
holds. For any $(i ,j) \in \mathcal{V}\backslash \mathcal{E}$, we must have 
\begin{equation*}
(i ,j) \in \mathcal{T}(\bm X^\star, \epsilon^\star) \cup \left\lbrace\mathcal{I}^\star \right\rbrace^c.
\end{equation*}
Recall that $\big[\nabla f \left(\bm X^\star \right)\big]_{ij} < 0$ for any $(i,j) \in \mathcal{T}(\bm X^\star, \epsilon^\star)$, and $\big[\nabla f \left(\bm X^\star \right)\big]_{ij} = 0$ for any $(i,j) \in \left\lbrace\mathcal{I}^\star \right\rbrace^c$. Overall, we obtain 
\begin{equation*}
\big[ \nabla f \left(\bm X^\star \right)\big]_{\mathcal{V}\backslash \mathcal{E}} \leq \bm 0.
\end{equation*}
To sum up, all the conditions in Theorem~\ref{proposition:unique} hold for every limit point $\bm X^\star$, and thus every limit point is the minimizer of Problem \eqref{MLE}. 

Since the minimizer of Problem \eqref{MLE} is unique, we obtain that the limit point of the sequence $\left\lbrace \bm X_k \right\rbrace$ is also unique, and thus $\left\lbrace \bm X_k \right\rbrace$ is convergent. Therefore, we conclude that the sequence $\left\lbrace \bm X_k \right\rbrace$ converges to the unique minimizer of Problem \eqref{MLE}. The monotone decreasing of the sequence $\left\lbrace f(\bm X_k)\right\rbrace$ can be established by Proposition~\ref{proposition_decrease}.
\end{proof}

\subsection{Proof of Theorem~\ref{theorem_support}}\label{proof-support}
\begin{proof}
We prove that the support of $\bm X^\star$ is consistent with the set $\mathcal{I}^c_k$ for a sufficiently large $k$. Without loss of generality, we specify the constant $\delta$ in \eqref{epsl-k} as 
\begin{equation}\label{def-delta}
\delta = \omega \min_{(i,j) \in \mathrm{supp} (\bm X^\star)} \big| \big[ \bm X^\star \big]_{ij} \big|,
\end{equation}
where $\omega \in (0, 1)$ is a constant. Note that the sequence $\left\lbrace \bm X_k \right\rbrace$ converges to $\bm X^\star$. Under \cref{assumption_gradient} that $\big[ \nabla f(\bm X^\star) \big]_{ij}$ is strictly negative for any $(i,j) \in \mathrm{supp}^c(\bm X^\star) \setminus \mathcal{E}$, there must exist some $a > 0$ and $K_1 \in \mathbb{N}_+$ such that  
\begin{equation}\label{nabla-bound}
\big[ \nabla f(\bm X_k) \big]_{ij} < - \frac{a}{2(1-\alpha) m^2}, \quad \forall \, (i,j) \in \mathrm{supp}^c(\bm X^\star) \setminus \mathcal{E}
\end{equation}
holds for any $k \geq K_1$, where $\alpha \in (0,1)$ is a constant, and $m$ is a positive constant defined in Lemma~\ref{lemma:level set}. We consider a neighbourhood of $\bm X^\star$ defined by
\begin{equation}\label{def-neigh}
\mathcal{N} \big(\bm X^\star; r \big) := \left\lbrace \bm X \in \mathbb{R}^{p \times p} \, \big| \, \big\| \bm X - \bm X^\star \big\|_{\mathrm{F}} \leq r \right\rbrace,
\end{equation}
where $r$ is a positive constant defined as
\begin{equation}\label{def-r}
r: = \min \Big( c  \min_{(i,j) \in \mathrm{supp} (\bm X^\star)} \big| \big[ \bm X^\star \big]_{ij} \big|, \, a, \, \delta \Big),
\end{equation}
where $c<1-\omega$ is a positive constant.
There must exist $K_2 \in \mathbb{N}_+$ such that  
$
\bm X_k \in \mathcal{N} \big(\bm X^\star; r \big)  
$
holds for any $k \geq K_2$. Take $K_o = \max(K_1, K_2)$. For any $k \geq K_o$ and $(i,j) \in \mathrm{supp}(\bm X^\star)$, one has
\begin{equation*}
\big| \big[ \bm X^\star \big]_{ij} \big| - \big| \big[ \bm X_{k} \big]_{ij} \big|  \leq \big| \big[ \bm X^\star \big]_{ij} -  \big[ \bm X_{k} \big]_{ij} \big| \leq \big\| \bm X_{k} - \bm X^\star \big\|_{\mathrm{F}} \leq r,
\end{equation*}
Thus one can obtain, for any $k \geq K_o$,
\begin{equation}\label{X-bound1}
\big| \big[ \bm X_k \big]_{ij} \big|  \geq \min_{(i,j) \in \mathrm{supp}(\bm X^\star)} \big| \big[ \bm X^\star \big]_{ij} \big| - r > \delta >0, \quad \forall \, (i,j) \in \mathrm{supp}(\bm X^\star),
\end{equation}
where the last inequality follows from \eqref{def-delta} and \eqref{def-r}. Recall that for any $(i,j) \in \mathcal{T} (\bm X_k, \delta)$, 
$
\big| \big[ \bm X_k \big]_{ij} \big| \leq \delta.
$
Then one has
\begin{equation}\label{bound-set-Tk}
\mathcal{T} (\bm X_k, \delta) \subseteq \mathrm{supp}^c(\bm X^\star).
\end{equation}
Then the $\epsilon_k$ in \eqref{epsl-k} can be bounded by
\begin{equation}\label{epsl-bound}
\epsilon_k \geq \min \Big( 2(1-\alpha) m^2 \big\| \big[\nabla f(\bm X_k) \big]_{\mathrm{supp}^c(\bm X^\star) \setminus \mathcal{E}} \big\|_{\min}, \, \delta \Big) \geq \min ( a, \, \delta ),
\end{equation}
where the first and second inequalities follow from \eqref{bound-set-Tk} and \eqref{nabla-bound}, respectively. For any $k \geq K_o$ and $(i,j) \in \mathcal{T} (\bm X_k, \epsilon_k) \cup \mathcal{E}$, one has 
\begin{equation*}
\big| \big[ \bm X_k \big]_{ij} \big| \leq \epsilon_k \leq \delta,
\end{equation*}
where the second inequality follows from the definition of $\epsilon_k$. Thus we obtain
\begin{equation}\label{bound-set-Tk-ep}
\mathcal{T} (\bm X_k, \epsilon_k) \cup \mathcal{E} \subseteq \mathrm{supp}^c(\bm X^\star).
\end{equation}
On the other hand, for any $k \geq K_o$ and $(i,j) \in \mathrm{supp}^c(\bm X^\star)$, one has
\begin{equation}\label{support-set-comp}
\Big| \big[ \bm X_{k} \big]_{ij} \Big|  = \Big| \big[ \bm X_k \big]_{ij} -  \big[ \bm X^\star \big]_{ij} \Big| \leq \big\| \bm X_{k} - \bm X^\star \big\|_{\mathrm{F}} \leq r \leq \epsilon_k,
\end{equation}
where the last inequality follows from \eqref{def-r} and \eqref{epsl-bound}. Therefore, one has
\begin{equation*}
\mathrm{supp}^c(\bm X^\star) \subseteq \mathcal{T} (\bm X_k, \epsilon_k) \cup \mathcal{E} \cup \mathcal{B}_k^{(1)} \cup \mathcal{B}_k^{(2)}.
\end{equation*}
Note that $\mathcal{B}_k^{(5)} \cap \mathrm{supp}^c(\bm X^\star) = \varnothing$, because any $(i,j) \in \mathcal{B}_k^{(5)}$ corresponds to the element on the diagonal which must be nonzero. Moreover, following from \eqref{nabla-bound}, one has
\begin{equation*}
\big(\mathrm{supp}^c(\bm X^\star) \setminus \mathcal{E} \big) \cap \mathcal{B}_k^{(1)} = \varnothing, \quad \mathrm{and} \quad \big( \mathrm{supp}^c(\bm X^\star) \setminus \mathcal{E} \big) \cap \mathcal{B}_k^{(2)} = \varnothing.
\end{equation*}
Therefore, we can obtain
$
\mathrm{supp}^c(\bm X^\star) \subseteq \mathcal{T} (\bm X_k, \epsilon_k) \cup \mathcal{E}
$.
Together with \eqref{bound-set-Tk-ep}, we obtain
\begin{equation}\label{suppc=TE}
\mathrm{supp}^c(\bm X^\star) = \mathcal{T} (\bm X_k, \epsilon_k) \cup \mathcal{E} = \mathcal{I}_k.
\end{equation}
Equivalently, we have
\begin{equation}\label{supp=Ic}
\mathrm{supp}(\bm X^\star) = \mathcal{I}^c_k, \quad \forall \, k \geq K_o.
\end{equation}
We can see that the sets $\mathcal{I}_k$ and $\mathcal{I}_k^c$ are fixed for any $k \geq K_o$. Therefore, following from the iterate of $\bm X_{k+1}$, 
\begin{equation*}
\big[ \bm X_{k+1} \big]_{\mathcal{I}_{k+1}} = \big[ \bm X_{k+1} \big]_{\mathcal{I}_{k}} = \bm 0, \quad \forall \, k \geq K_o.
\end{equation*}
Moreover, together with \eqref{X-bound1} and \eqref{supp=Ic}, we obtain that for any $k \geq K_o$, $\big[ \bm X_{k+1} \big]_{\mathcal{I}^c_{k+1}} \neq \bm 0$.
Take $k_o = K_o +1$. We obtain 
\begin{equation*}
\mathrm{supp}(\bm X_k) = \mathcal{I}_k^c, \quad \forall \, k \geq k_o,
\end{equation*}
completing the proof.
\end{proof}

\subsection{Proof of Theorem~\ref{corollary-convere-rate}}\label{proof-rate}
\begin{proof}
Theorem~\ref{corollary-convere-rate} is a direct extension of the result in \cite{bertsekas2016nonlinear}. Let $g$ be twice continuously differentiable and consider the following iterate
\begin{equation*}
\bm x_{x+1} = \bm x_k - \gamma_k \bm D_k \nabla g(\bm x_k),
\end{equation*}
where $\bm D_k$ is positive definite and symmetric. Then the sequence $\{ \bm x_k \}$ admits the following convergence result \cite{bertsekas2016nonlinear},
\begin{equation}\label{dimi}
\underset{ k \to \infty}{\lim \sup\limits}  \frac{\big \| \bm x_{k+1} - \bm x^\star \big \|_{\bm D_k^{-1}}^2}{\big \| \bm x_{k} - \bm x^\star \big \|_{\bm D_k^{-1}}^2} = \underset{ k \to \infty}{\lim \sup\limits} \, \max\Big( |1 - \gamma_k m'_k|^2, |1 - \gamma_k M'_k|^2 \Big),
\end{equation}
where $\bm x^\star$ is the limit of the sequence $\{ \bm x_k \}$, which satisfies that $\nabla g(\bm x^\star) = \bm 0$ and $\nabla^2 g(\bm x^\star)$ is positive definite, and $m'_k$ and $M'_k$ are the smallest and largest eigenvalues of $(\bm D_k)^{1/2} \nabla^2 g(\bm x_k) (\bm D_k)^{1/2}$, respectively. This conclusion is extended from the convergence result for the quadratic objective function. Conceptually, this makes sense because a twice continuously differentiable objective function is very close to a positive definite quadratic function in the neighborhood of a non-singular local minimum.

Following from \cref{theorem_support}, for any $k \geq k_o$, iterate~\eqref{update-new} can be written as
\begin{equation}\label{scaled_iterate}
\big[ \bm X_{k+1} \big]_{\mathcal{I}^c_k} = \big[ \bm X_{k} \big]_{\mathcal{I}^c_k} - \gamma_k \bm R_k^{-1} \big[ \nabla f (\bm X_k) \big]_{\mathcal{I}^c_k},
\end{equation}
which reduces to an iterate of an unconstrained optimization algorithm on some subspace. Following from the results in \eqref{dimi}, we obtain
\begin{equation}\label{rate1}
\underset{ k \to \infty}{\lim \sup\limits}  \frac{\big \| \big[\bm X_{k+1}\big]_{\mathcal{I}^c_k} - \big[\bm X^\star \big]_{\mathcal{I}^c_k} \big \|_{\bm R_k}^2}{\big \| \big[ \bm X_{k} \big]_{\mathcal{I}_k^c} - \big[\bm X^\star \big]_{\mathcal{I}_k^c} \big \|_{\bm R_k}^2} = \underset{ k \to \infty}{\lim \sup\limits} \, \max\Big( |1 - \gamma_k m_k|^2, |1 - \gamma_k M_k|^2 \Big),
\end{equation}
where $m_k$ and $M_k$ are the smallest and largest eigenvalues of $\bm R_k^{-\frac{1}{2}} \big[ \bm H_k \big]_{\mathcal{I}_k^c \mathcal{I}_k^c} \bm R_k^{-\frac{1}{2}}$, respectively. 
Following from \eqref{def_M}, \cref{theorem_conv}, and $\bm R_k^{-1} = \big[ \bm H_k^{-1} \big]_{\mathcal{I}_k^c \mathcal{I}_k^c}$, we obtain
\begin{equation}\label{rem-Ic}
\big \| \big[ \bm X_{k} \big]_{\mathcal{I}_k^c} - \big[\bm X^\star \big]_{\mathcal{I}_k^c} \big \|_{\bm R_k}^2 = \big \| \bm X_{k}  - \bm X^\star \big \|_{\bm M_k}^2, \quad \forall \, k \geq k_o.
\end{equation}
When $k \geq k_o$, the line search condition reduces to
\begin{equation*}
 f \left( \bm X_{k+1} \right) \leq f\left( \bm X_k \right) - \alpha \gamma_k \big\langle \big [\nabla f(\bm X_k) \big]_{\mathcal{I}^c_k}, \, \big [\bm P_k \big]_{\mathcal{I}^c_k} \big\rangle.
\end{equation*}
Similar to the unconstrained case in \cite{bertsekas1976goldstein}, the step size must satisfy the line search condition if $\gamma_k \geq \min \big(1, \, 2(1-\alpha)\beta/ M_k \big)$ as $k \to \infty$. Then together with \eqref{rate1} and \eqref{rem-Ic}, we complete the proof.
\end{proof}

\end{document}